
\documentclass[journal]{IEEEtran}
\ifCLASSINFOpdf
\else
\fi
%
%

%
\usepackage{amsmath}
\usepackage{amsfonts}
\usepackage{amsthm}
\usepackage{booktabs}
\usepackage{multirow} 
\usepackage{adjustbox}

\newtheorem{theorem}{Theorem}
\newtheorem*{thm}{Theorem}

\newtheorem{definition}{Definition}

\DeclareMathOperator{\spn}{span}

\begin{document}
%
\title{Rational Gaussian wavelets and corresponding model driven neural networks$^*$}
%
%
%

\author{Attila Mikl\'os \'Amon$^1$,
        Kristian Fenech$^2$, P\'eter Kov\'acs$^1$ and Tam\'as D\'ozsa$^{3,1,4}$
\thanks{$^*$ $\copyright$ 2025 IEEE. Personal use of this material is permitted. Permission from IEEE must be obtained for all other uses, in any current or future media, including reprinting/republishing this material for advertising or promotional purposes, creating new collective works, for resale or redistribution to servers or lists, or reuse of any copyrighted component of this work in other works.}
\thanks{$^1$ E\"otv\"os Lor\'and University, Faculty of Informatics Department of Numerical Analysis. Contact: ze3vjn@inf.elte.hu (Attila \'Amon Mik\'os), kovika@inf.elte.hu (P\'eter Kov\'acs), dotuaai@inf.elte.hu (Tam\'as D\'ozsa)}
\thanks{$^2$ E\"otv\"os Lor\'and University, Faculty of Informatics Department of Artificial Intelligence. Contact: fenech@inf.elte.hu (Kristian Fenech)}
\thanks{$^3$ HUN-REN Institute for Computer Science and Control, Systems and Control Laboratory. Contact: dozsa.tamas@hun-ren.sztaki.hu (Tam\'as D\'ozsa)}
\thanks{$^4$ University of Gy\H{o}r, Vehicle Industry Research Center: dozsa.tamas@hun-ren.sztaki.hu (Tam\'as D\'ozsa)}
}

%
%

\markboth{Journal of \LaTeX\ Class Files,~Vol.~14, No.~8, August~2015}%
{Shell \MakeLowercase{\textit{et al.}}: Bare Demo of IEEEtran.cls for IEEE Journals}
%



\maketitle

\begin{abstract}
In this paper we consider the continuous wavelet transform using Gaussian wavelets multiplied by an appropriate rational term. The zeros and poles of this rational modifier act as free parameters and their choice highly influences the shape of the mother wavelet. This allows the proposed construction to approximate signals with complex morphology using only a few wavelet coefficients. We show that the proposed rational Gaussian wavelets are admissible and provide numerical approximations of the wavelet coefficients using variable projection operators. In addition, we show how the proposed variable projection based rational Gaussian wavelet transform can be used in neural networks to obtain a highly interpretable feature learning layer. We demonstrate the effectiveness of the proposed scheme through a biomedical application, namely, the detection of ventricular ectopic beats (VEBs) in real ECG measurements.
\end{abstract}

\begin{IEEEkeywords}
Gaussian wavelets, continuous wavelet transform, rational functions, variable projection, variable projection networks, ECG
\end{IEEEkeywords}

%
\IEEEpeerreviewmaketitle

\section{Introduction}
\IEEEPARstart{R}{epresenting} signals using wavelet coefficients has proved beneficial in a variety of fields. Even though the modern notion of the wavelet transform is usually dated from~\cite{grossmann1984decomposition} written by Alex Grossmann and Jean Morlet, historical accuracy requires that previous works now included in broader wavelet theory are also mentioned. These include results achieved by Haar, Franklin, Littlewood and Paley and others (we recommend~\cite{polikar1999story} for a thorough overview). 

The widespread use of wavelets in various signal and image processing applications is no coincidence. In addition to important mathematical properties, the wavelet transform provides the so-called time-scale representation of signals, which is closely associated with time-frequency representations. In other words, the wavelet transform allows us to study the frequency profile of signals in certain time windows. In this way, representing signals using wavelet transforms inherently provides interpretable information about their behavior. This property has been widely expoited in signal processing related fields such as fault detection, where sudden changes in the frequency profile of signals can indicate the appearance of faults~\cite{isermann2006fault}. Contrary to classical time-frequency approaches such as the Gabor-transform (see e.g.~\cite{gasquet2013fourier}), wavelet representations allow for different time localisation properties at different scales (frequencies), which is also well exploited in applications. Finally, the introduction of multiresolution analysis (see~\cite{mallat1989theory} and e.g.~\cite{gasquet2013fourier}) allowed for a practically realizable way to construct orthogonal wavelets. In turn, quick algorithms realizing the discrete wavelet transform were developed~\cite{sweldens1998lifting} allowing for many applications in signal and image processing~\cite{rajani2023r, isermann2006fault, fan2023lwtnet, tian2023multi}. For a more comprehensive review of wavelets and their history, the authors recommend~\cite{daubechies1992ten}.

In most applications, a fixed so-called mother (or analyzing) wavelet is used to perform wavelet analysis of the signals. These may have certain general beneficial properties such as a compact support, symmetry or smoothness. In addition, a mother wavelet may be selected due to its morphological similarity to the signals to be analyzed, for example so-called Ricker wavelets are frequently employed to represent ECG signals~\cite{li2021waveletkernelnet, burke2001ecg}. In practical applications however, signals are always represented by a finite set of wavelet coefficients and fixing a mother wavelet may not be able to capture the behavior of the signal to a satisfactory degree. In order to overcome this limitation, a variety of adaptive wavelet transforms have been introduced (see e.g.~\cite{claypoole1998adaptive, erdol1996wavelet, chen2006adaptive}). For example in~\cite{claypoole1998adaptive, erdol1996wavelet}, various adaptive discrete wavelet transforms are discussed with the purpose to approximate the signals well using only a few wavelet coefficients. These methods however are difficult to use in many real life scenarios, because the discrete wavelet coefficients lack the time invariant property and the resulting mother wavelets often do not possess desirable properties such as smoothness and symmetry. For this reason, adaptive continuous wavelet transforms were also investigated. In~\cite{chen2006adaptive} for example, the center frequency parameter of Morlet wavelets was subject to optimization to match the behavior of the processed signals.

In this work, we present a new class of mother wavelets referred to as rational Gaussian wavelets (RGW). Instead of previous approaches, where parameters of existing wavelets were optimized (e.g.~\cite{chen2006adaptive}), the proposed wavelet class has arbitrary degrees of freedom. This allows for the construction of highly unusual mother wavelet morphologies (see e.g. Fig.~\ref{fig:ecg}), while guaranteeing smoothness and symmetry. Admissibility is proved for the RGW class. In addition, every parameter of the proposed class of wavelets can be optimized using gradient based methods which allows the proposed transform to be easily included in modern machine learning (ML) architectures.

ML methods have become increasingly important in signal processing~\cite{sahoo2020machine, dahrouj2021overview}. This is in large part due to their ability to approximate highly nonlinear operators based solely on data. Deep learning methods have achieved remarkable results in image processing~\cite{gu2018recent}, natural language processing~\cite{fanni2023natural} and biomedical signal processing~\cite{kovacs2022vpnet, vpsvm, luz2016ecg} among other disciplines. One of the most cited limitations of ML methods however is that well performing models usually consist of a large number of parameters, which are difficult to interpret by humans. For example, the popular GPT-3 model contains $\approx 175$ billion parameters~\cite{gpt}. These factors make the deployment of such models expensive in the case of real time, and especially safety critical applications (such as autonomous driving and biomedical signal processing). 

To overcome this issue, several recent research directions have been considered. The taxonomy of explainable artificial intelligence methods proposed in~\cite{Lipton} identifies two important, distinct ways to approach the question of explainability. So-called post-hoc methods aim to provide explanations for the behavior of large deep learning models (see e.g.~\cite{Lipton} and~\cite{spricher}). Transparent ML approaches on the other hand try to propose partially interpretable ML models. Recent examples of transparent ML schemes include physics informed neural networks~\cite{hammernik2023physics, chen2018neural, pereira2020fema} which incorporate classical physical models with meaningful parameters into ML models. Another advancement in transparent machine learning was the development of so-called deep unfolding networks~\cite{hershey2014deep}, which implement the iterative steps of classical numerical methods as layers of a neural network. It is important to mention that the use of the wavelet transform has also been considered previously as a means of achieving transparent ML models. In particular, the recently introduced WaveletKernelNet~\cite{li2021waveletkernelnet} (WKN) exploits the relationship between convolution and wavelet coefficients to create a partially interpretable neural network relying on the wavelet transform. Nevertheless in section~\ref{sec:vp} we provide a qualitative comparison between the chosen model architecture and WKN.

In this work, of particular interest to us are so-called variable projection (VP) based ML methods~\cite{kovacs2022vpnet, golub1973differentiation, golub2003separable}. VP operators implement adaptive orthogonal projections in Hilbert spaces and can be used to obtain interpretable and sparse signal representations. They have been successfully integrated into various ML models including neural networks~\cite{kovacs2022vpnet} and support vector machines~\cite{vpsvm}. In our recent work~\cite{eusipco}, we proposed the approximation of continuous wavelet coefficients using VP operators. In fact, a VP-neural network architecture was considered, where the first layer was responsible for representing the input using only a few wavelet coefficients. The learnable parameters of the layer consisted of the scale and translation parameters associated with the wavelet coefficients. The output of the layer was the collection of wavelet coefficients defined by these learned parameters. In this way, the proposed model learned which time and scale parameters provided the most crucial information for the classification of the input signals. In the current study, this idea is expanded upon by introducing a variable projection based continuous wavelet transform which uses the rational Gaussian wavelet class that we introduce in this paper. In this approach, not only translation and scale parameters, but also parameters governing the morphology of the mother wavelet can be learned. As shown in a case study in section~\ref{sec:expr}, this allows for sparser signal representations and clearer explainability than similar alternative ML methods.

In summary, the main contributions of our paper are the following:

\begin{enumerate}
    \item The introduction of rational Gaussian wavelets (RGW). Through an arbitrary number of parameters, these wavelets allow for the manipulation of the mother wavelet's morphology. 
    \item We show that the RGW class is admissible, which allows for signal reconstruction based on the continuous wavelet coefficients (in $L_2(\mathbb{R})$ norm).
    \item We show that parameters of an RGW mother wavelet can be optimized using gradient based methods. This allows for simple inclusion into the variable projection framework and subsequently ML models utilizing backpropagation.
    \item We provide an upper bound on the error of wavelet coefficient estimates acquired using variable projection operators for signals with a compact support.
    \item We provide a case study on arrhythmia detection from ECG signals. We show that the proposed RGW-VP Net architecture can be used to successfully classify ventricular ectopic heartbeats (VEBs) from ECG data. The interpretability of the learned parameters is examined, and it is shown that the model learns information which corresponds to VEB descriptions in the medical literature. In addition, it is shown that the proposed model requires fewer parameters to achieve state-of-the-art classification accuracy. 
\end{enumerate}

The rest of the paper is organized as follows. In section~\ref{sec:wav} a brief overview of adaptive wavelet transforms and related literature is provided to motivate the introduction of RGW. In section~\ref{sec:rgw} we introduce rational Gaussian wavelets and discuss their most important properties. Section~\ref{sec:vp} reviews variable projection operators and variable projection networks (VP-NET). A realization of the VP-NET architecture implementing a continuous wavelet transform using RGW is also introduced. Section~\ref{sec:expr} contains our experiments and related discussions about ECG classification. Finally, in section~\ref{sec:conc} we summarize our findings and discuss future research directions. Proofs and related calculations can be found in the appendix.

\section{Motivation for adaptive continuous wavelet transforms}
\label{sec:wav}

In this section we discuss wavelet transforms and provide motivation for the introduction of the proposed rational Gaussian wavelets. 

\subsection{Wavelet transforms}

Consider the family of functions

\begin{equation*}
    \psi_{\lambda, \tau}(t) = \frac{1}{\sqrt{ | \lambda |}} \psi( 1/ \lambda \cdot (t - \tau) ), \quad (t, \lambda, \tau \in \mathbb{R}, \lambda \neq 0),
\end{equation*}
where the function $\psi$ is referred to as the "mother" or "analyzing" wavelet. The wavelet transform of a function $f$ with respect to $\psi$ is defined as a function of two variables
\begin{equation}
    \label{eq:wavcoeff}
    W_{\psi} f(\lambda, \tau) := \int_{-\infty}^{\infty} f(t) \overline{\psi}_{\lambda, \tau}(t) dt,
\end{equation}
whenever the integral exists. In Eq.~\eqref{eq:wavcoeff}, $\overline{\psi}$ denotes the complex conjugate of $\psi$. We note that the existence criteria is fulfilled for example if $f, \psi \in L_{2}(\mathbb{R})$ by H\"older's inequality. For a fixed scale-translation pair $(\lambda, \tau) \in \mathbb{R} \setminus \{0\} \times \mathbb{R}$, the number $W_{\psi} f(\lambda, \tau)$ will also be referred to as a \textit{wavelet coefficient} corresponding to $(\lambda, \tau)$.

The admissibility theorem (see e.g.~\cite{gasquet2013fourier}) describes a set of conditions on $\psi$ which allow for signal reconstruction in $L_2$ norm.

\begin{theorem}[Admissibility property]
\label{thm:admis}
Suppose the wavelet $\psi \in L_1(\mathbb{R}) \cap L_2(\mathbb{R})$ satisfies

\begin{enumerate}
    \item $\int_{-\infty}^{\infty} \frac{1}{|\xi|} \cdot \left| \widehat{\psi}(\xi) \right|^2 d \xi = M < \infty $,
    \item $\|\psi\|_2 = 1$,
\end{enumerate}
where $\widehat{\psi}$ denotes the Fourier transform of $\psi$. Then, for any $f \in L_2(\mathbb{R})$ the following results hold:
\begin{enumerate}
    \item Energy conservation:
    \begin{equation*}
        \frac{1}{M} \int_{-\infty}^{\infty} \int_{-\infty}^{\infty} |W_{\psi} f(\lambda, \tau)|^2 \frac{d \lambda d \tau}{\lambda^2} = \int_{-\infty}^{\infty} |f(t)|^2 dt.
    \end{equation*}
    \item Define the function $f_{\varepsilon}$ as
    \begin{equation*}
        f_{\varepsilon}(t) := \frac{1}{M} \int_{|\lambda| \geq \varepsilon} \int_{-\infty}^{\infty} W_{\psi}f(\lambda, \tau) \psi_{\lambda, \tau}(t) \frac{d \lambda d \tau}{\lambda^2}.
    \end{equation*}
    Then, $f$ can be reconstructed from the wavelet coefficients $W_{\psi}f$ in the sense that $f_{\varepsilon} \to f$ in $L_2(\mathbb{R})$ as $\varepsilon \to 0^{+}$.
\end{enumerate}
\end{theorem}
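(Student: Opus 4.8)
The plan is to pass everything to the Fourier domain and reduce the two assertions to Plancherel's identity together with a single change of variables that activates the admissibility hypothesis (1). Fix the normalization $\widehat{g}(\xi)=\int_{-\infty}^{\infty} g(t)e^{-2\pi i\xi t}\,dt$, for which Plancherel's identity carries no extra constant. Since $\psi_{\lambda,\tau}(t)=|\lambda|^{-1/2}\psi((t-\tau)/\lambda)$, a direct computation gives $\widehat{\psi_{\lambda,\tau}}(\xi)=|\lambda|^{1/2}e^{-2\pi i\tau\xi}\widehat{\psi}(\lambda\xi)$. Applying Parseval's formula to $W_\psi f(\lambda,\tau)=\langle f,\psi_{\lambda,\tau}\rangle$ then yields
\begin{equation*}
W_\psi f(\lambda,\tau)=|\lambda|^{1/2}\int_{-\infty}^{\infty}\widehat{f}(\xi)\,\overline{\widehat{\psi}(\lambda\xi)}\,e^{2\pi i\tau\xi}\,d\xi,
\end{equation*}
so that for each fixed $\lambda\neq0$ the map $\tau\mapsto W_\psi f(\lambda,\tau)$ is the inverse Fourier transform in $\tau$ of $\xi\mapsto|\lambda|^{1/2}\widehat{f}(\xi)\overline{\widehat{\psi}(\lambda\xi)}$. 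This identity is the common engine for both parts.

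For the energy conservation I would apply Plancherel's theorem a second time, now in $\tau$, to obtain $\int_{-\infty}^{\infty}|W_\psi f(\lambda,\tau)|^2\,d\tau=|\lambda|\int_{-\infty}^{\infty}|\widehat{f}(\xi)|^2|\widehat{\psi}(\lambda\xi)|^2\,d\xi$. Dividing by $\lambda^2$, integrating in $\lambda$, and invoking the Tonelli theorem to exchange the order of integration (legitimate since the integrand is nonnegative and measurable), the inner $\lambda$-integral becomes, for each fixed $\xi\neq0$, $\int_{-\infty}^{\infty}|\widehat{\psi}(\lambda\xi)|^2/|\lambda|\,d\lambda$. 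The substitution $u=\lambda\xi$ converts this into $\int_{-\infty}^{\infty}|\widehat{\psi}(u)|^2/|u|\,du=M$ by hypothesis (1), independently of $\xi$. Hence the full double integral equals $M\int_{-\infty}^{\infty}|\widehat{f}(\xi)|^2\,d\xi=M\|f\|_2^2$, and dividing by $M$ gives the stated identity.

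For the reconstruction I would show that $f_\varepsilon$ is a Fourier multiplier applied to $f$. Interchanging the Fourier transform with the double integral defining $f_\varepsilon$ and using the identity above for the inner $\tau$-integral, one finds $\widehat{f_\varepsilon}(\xi)=m_\varepsilon(\xi)\,\widehat{f}(\xi)$, where $m_\varepsilon(\xi):=\tfrac{1}{M}\int_{|\lambda|\geq\varepsilon}|\widehat{\psi}(\lambda\xi)|^2/|\lambda|\,d\lambda$. The same change of variables $u=\lambda\xi$ shows $0\leq m_\varepsilon(\xi)\leq1$ for every $\xi$ and $m_\varepsilon(\xi)\to1$ as $\varepsilon\to0^{+}$ for each $\xi\neq0$, since the region $|\lambda|\geq\varepsilon$ exhausts $\mathbb{R}\setminus\{0\}$ and produces $M/M=1$. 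Consequently $|\widehat{f_\varepsilon}(\xi)-\widehat{f}(\xi)|^2=(1-m_\varepsilon(\xi))^2|\widehat{f}(\xi)|^2\leq|\widehat{f}(\xi)|^2\in L_1(\mathbb{R})$, so the dominated convergence theorem gives $\|\widehat{f_\varepsilon}-\widehat{f}\|_2\to0$, and by Plancherel this is equivalent to $f_\varepsilon\to f$ in $L_2(\mathbb{R})$.

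The routine ingredients here are the two applications of Plancherel and the scaling substitution; the genuine care is needed in justifying the interchanges of integration. In the energy identity this is painless because Tonelli applies to the nonnegative integrand, leaving only measurability to confirm. The delicate step is the reconstruction, where moving the Fourier transform inside the $(\lambda,\tau)$ integral is not licensed over the full scale axis — this is precisely why the statement truncates to $|\lambda|\geq\varepsilon$. I expect the main work to be verifying absolute integrability on $\{|\lambda|\geq\varepsilon\}\times\mathbb{R}\times\mathbb{R}$ so that Fubini's theorem produces the multiplier $m_\varepsilon$; the available tools are the uniform bound $|W_\psi f(\lambda,\tau)|\leq\|f\|_2\|\psi\|_2=\|f\|_2$ from the Cauchy–Schwarz inequality, the boundedness of $\widehat{\psi}$ (from $\psi\in L_1(\mathbb{R})$) together with $\widehat{\psi}\in L_2(\mathbb{R})$ (from $\psi\in L_2(\mathbb{R})$), and the decay $\lambda^{-2}$ in the outer measure, with the cutoff at $|\lambda|\geq\varepsilon$ removing the singularity near $\lambda=0$.
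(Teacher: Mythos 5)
The paper does not actually prove this theorem: it is quoted as a classical result and the proof is deferred to the cited references (Gasquet--Witomski and Heil--Walnut), so there is no in-paper argument to compare against. Your proposal is the standard Fourier-domain proof found in those references and it is correct: the identity $\widehat{\psi_{\lambda,\tau}}(\xi)=|\lambda|^{1/2}e^{-2\pi i\tau\xi}\widehat{\psi}(\lambda\xi)$, Plancherel in $\tau$, the substitution $u=\lambda\xi$ that turns the inner $\lambda$-integral into the constant $M$, and the multiplier $m_\varepsilon$ with dominated convergence are exactly the right ingredients, and your accounting of where Tonelli suffices versus where Fubini must be justified is accurate. The only step you leave as a sketch is the derivation of $\widehat{f_\varepsilon}=m_\varepsilon\widehat{f}$; the cleanest way to close it is to test $f_\varepsilon$ against an arbitrary $g\in L_2(\mathbb{R})$, apply Parseval in $\tau$ to the pairing $\int W_{\psi}f\,\overline{W_{\psi}g}\,d\tau$, and use the bound $\int_{|\lambda|\ge\varepsilon}\int_{\mathbb{R}}|W_{\psi}f(\lambda,\tau)\psi_{\lambda,\tau}(t)|\,\lambda^{-2}\,d\tau\,d\lambda\le\|f\|_2\|\psi\|_1\int_{|\lambda|\ge\varepsilon}|\lambda|^{-3/2}\,d\lambda<\infty$ to license the interchange, which is precisely the role of the cutoff you identified.
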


In section~\ref{sec:wav} we state a theorem that guarantees admissibility for the proposed family of wavelets. For further discussion of reconstruction formulas and a proof of theorem~\ref{thm:admis}, we refer to~\cite{gasquet2013fourier, heil2006fundamental}.

In applications, wavelet coefficients are computed in a numerical fashion. This also means that the parameters $(\lambda, \tau)$ have to be restricted to a (finite) subset of $\mathbb{R} \setminus \{0\} \times \mathbb{R}$. A general way to express this (see e.g.~\cite{gasquet2013fourier}) is to consider the wavelet coefficients $W_{\psi}f(\lambda_n, \tau_m) \ (m,n \in \mathbb{Z})$, where 
\begin{equation}
    \label{eq:discretewavpoints}
    \lambda_n := \alpha^{-n}, \quad \tau_m := m \beta.
\end{equation}
In this case, the coefficients are calculated using the functions $\psi_{\lambda_n, \tau_m}(t) = \alpha^{n/2} \psi(\alpha^n \cdot t - m \cdot \beta)$. The computational cost of the wavelet transform increases as $\alpha$ tends to $1$ and $\beta$ tends to $0$. Usually, the parameters are chosen as $\alpha=2$ and $\beta=1$, a choice, where the scales $\lambda_n$ correspond to octaves in music~\cite{gasquet2013fourier}. As mentioned in the introduction, different strategies have been developed to calculate the required wavelet coefficients $W_{\psi} f(\lambda_n, \tau_m)$. 

A popular approach is to construct the family $\psi_{\lambda_n, \tau_m}$ so that they form an orthogonal basis in $L_2(\mathbb{R})$ using multiresolution analysis. The so-called discrete wavelet transform (DWT) algorithm uses such orthogonal wavelets and has the benefit, that discrete wavelet coefficients can be used to achieve perfect signal reconstruction.
Even though fast DWT methods~\cite{sweldens1998lifting} have been developed relying on digital filtering, the use of orthogonal wavelets has its disadvantages. One problem is that, for some signal analysis tasks, a finer resolution of the coefficients is desirable~\cite{jordan1997implementation}. In addition, DWT algorithms are not translation invariant by default which makes their use difficult for some signal processing tasks. It should be noted that certain translation invariant (and from the reconstruction point of view redundant) variations, such as the stationary wavelet transform,~\cite{nason1995stationary} exist. Finally, when constructing orthogonal wavelets there is a trade off between "desirable" properties. For example, Daubechies' wavelets are $r$ times differentiable, however their support increases with $r$ meaning that smoother mother wavelets become less well localised in time. Another example is that the often desired symmetry property can only be approximated for orthogonal wavelets.

For the above reasons, continuous wavelet transform (CWT) algorithms are also frequently used for signal analysis~\cite{jordan1997implementation}. These approximate the wavelet coefficients~\eqref{eq:wavcoeff} using numerical quadrature formulas. CWT algorithms cannot achieve perfect signal reconstruction. On the other hand, CWT algorithms retain the important translation invariance property, and can be used with mother wavelets which were not constructed using multiresolution analysis. This allows for the use of wavelets with desirable properties such as Gaussian wavelets~\cite{ososkov2000gaussian}, which can be acquired by differentiating the Gaussian function. For example, Ricker's wavelet has enjoyed popularity for biological signal processing and fault detection applications~\cite{li2021waveletkernelnet, burke2001ecg}. The adaptive wavelet family proposed in section~\ref{sec:rgw} is also closely connected to Gaussian wavelets.

\subsection{Adaptive wavelet transforms}

Choosing the correct analyzing wavelet for the CWT algorithm can be a challenging problem. Fig.~\ref{fig:wavcomp} shows the scalograms (absolute values of wavelet coefficients) computed by MatLab's CWT method for a non-stationary signal consisting of different sinusoids in different time intervals. For this example, the analyzing wavelets were chosen as a complex Morlet and a Ricker wavelet. The analyzing wavelets which were used to create the scalorgrams on Fig.~\ref{fig:wavcomp} are illustrated in Fig.~\ref{fig:cmor}. As can be seen, different choices of the analyzing wavelet produce different quality time-scale representations. This motivates the development of adaptive analyzing wavelets, whose morphologies depend on tunable parameters.
\begin{figure}[!htb]
    \centering
    \includegraphics[width=\linewidth]{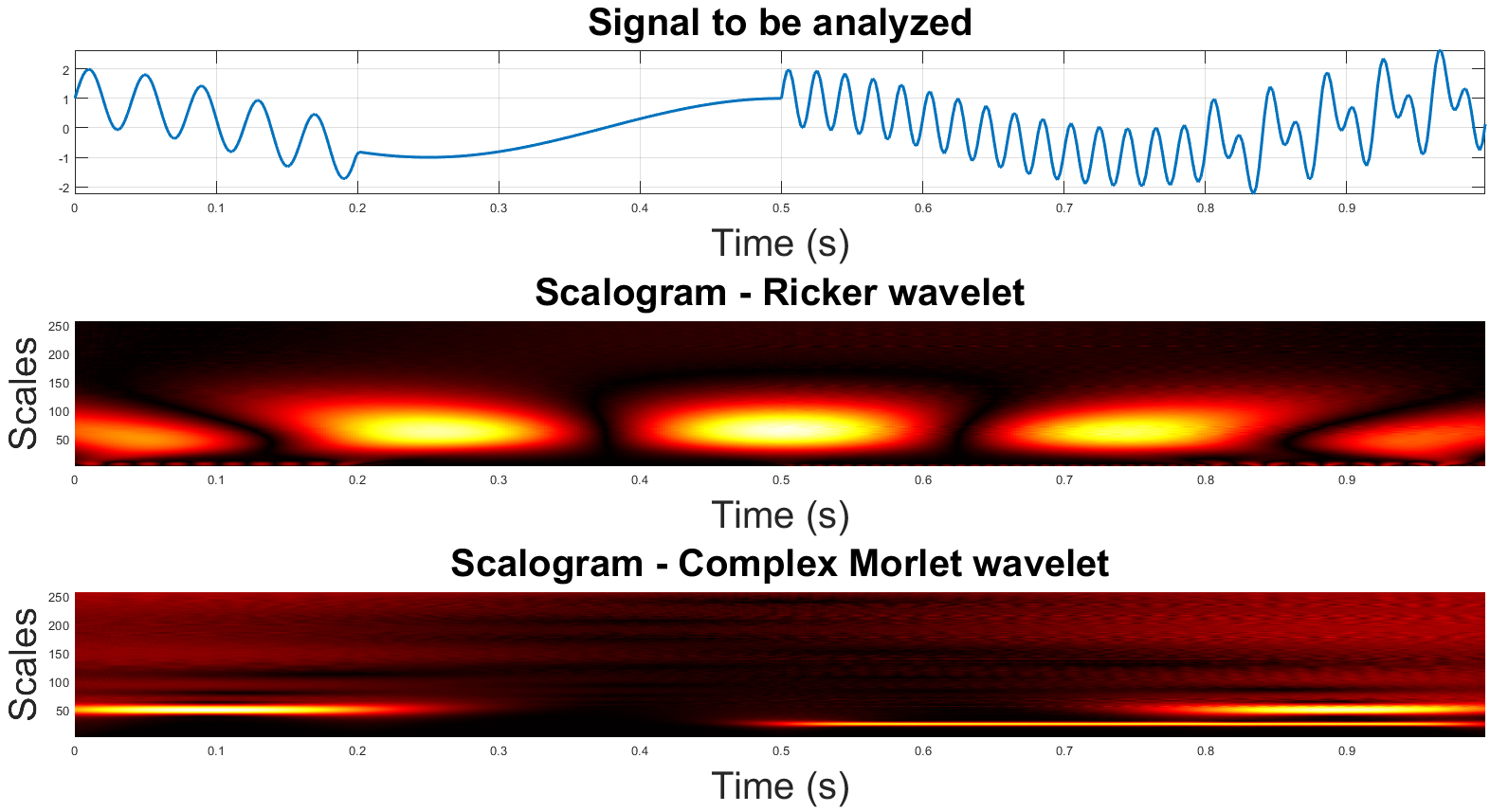}
    \caption{The frequency components of the signal change as a function of time. Scalograms returned by MatLab's CWT routine using different analyzing wavelets. The scalogram in the bottom row captures the frequency profile of the signal much more clearly, demonstrating that the quality of the information provided by wavelet coefficients heavily depends on the choice of the analyzing wavelet.}
    \label{fig:wavcomp}
\end{figure}
\begin{figure}[!htb]
    \centering
    \includegraphics[width=\linewidth]{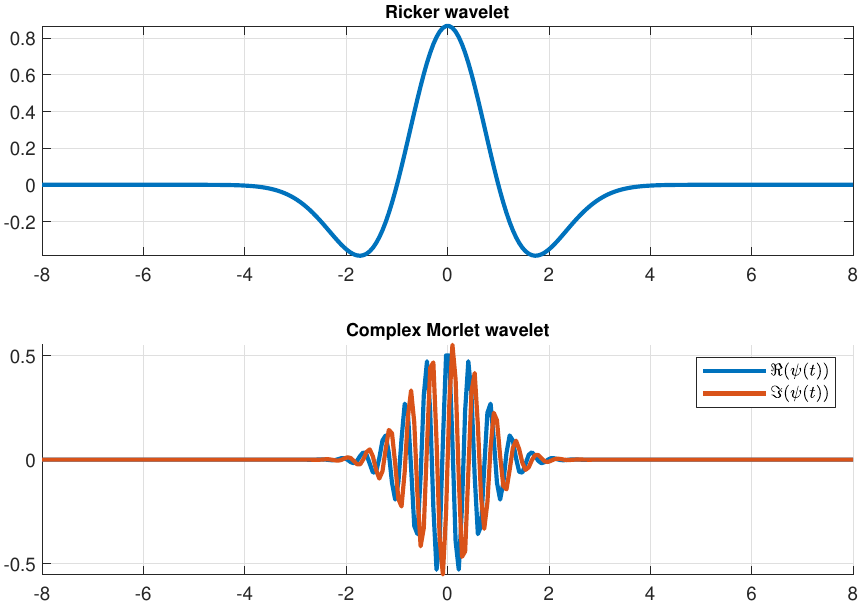}
    \caption{The analyzing wavelets used to construct Fig.~\ref{fig:wavcomp}.}
    \label{fig:cmor}
\end{figure}
The idea of adaptive wavelet transforms is not new. Efforts aimed at the development of adaptive DWT methods usually focused on finding appropriate wavelet filter values to match the signal. In~\cite{claypoole1998adaptive} for example, the so-called lifting scheme (see e.g.~\cite{sweldens1998lifting}) is utilized to construct filters which match the signal morphology. The lifting algorithm is viewed as a "prediction-error" decomposition, where the approximation coefficients of the DWT transform at a given scale are viewed as "predictors", and the detail coefficients are considered the error of the prediction. Minimizing the prediction error at each scale level allows for better quality reconstruction using only coefficients belonging to the first few scale levels. We also point the reader towards~\cite{erdol1996wavelet} for other DWT based adaptive wavelet decomposition schemes. Even though these are usually quick and guarantee that the transformation can be inverted, they suffer from the usual drawbacks associated with DWT. Namely, the smoothness and and symmetry properties of the learned wavelets cannot be guaranteed. In addition, for most of these algorithms time invariance of the computed wavelet coefficients is also not fulfilled, which makes their application difficult for the analysis of signals. Finally, DWT algorithms compute approximation and detail coefficients corresponding to the discretization shown in Eq.~\eqref{eq:discretewavpoints}. If instead the computed dilation and translation coefficients were subject to optimization, sparser signal representations may be achieved as shown in~\cite{eusipco}.

For the above reasons, adaptive CWT algorithms also enjoyed growing attention. Several papers optimized the parameters of Morlet wavelets~\cite{chen2006adaptive, lin2003gearbox} for various fault-detection applications. An important benefit of adaptive CWT schemes, is that they adapt the morphology of the analyzing wavelet to match the input signals, while simultaneously ensuring smoothness and symmetry properties. In~\cite{chen2006adaptive} for example, properties associated with Morlet wavelets are retained even though the centre frequency parameter of the analyzing wavelet is modified through numerical optimization. Although adaptive CWT methods do not guarantee perfect signal reconstruction, in applications (e.g. fault detection) invertibility is not the most crucial property of the transformation. In fact, in these applications a sparse representation of the input signals is usually preferred, which allows for clearer distinction between fault classes~\cite{isermann2006fault}. CWT is ideal for this, since in addition to learning parameters of the analyzing wavelet, adaptive CWT algorithms can also treat the dilation and translation parameters of the computed wavelet coefficients as free parameters. This idea was utilized for example in~\cite{li2021waveletkernelnet} and~\cite{eusipco}, where different CWT algorithms were used as feature extraction layers in neural networks. In this paper, we also demonstrate the utility of the proposed wavelet transformation using such wavelet transform enchanced machine learning models (see sections~\ref{sec:vp} and~\ref{sec:expr}). 
 
\section{Rational Gaussian Wavelets}
\label{sec:rgw}

Gaussian wavelets are defined by
\begin{equation}
    \label{eq:gausswav}
    \psi_n(t) := C_n \cdot \frac{d^n }{d t^n} e^{-t^2/2},
\end{equation}
where $n \in \mathbb{N}$ and the constant satisfies $C_n = 1 / \left\| \frac{d^n}{d t^n} e^{-t^2/2} \right\|_2$. These types of wavelets are commonly used in signal and image processing, especially since they exhibit good localisation properties on both time and frequency space. In particular, Ricker wavelets (the $n=2$ case in Eq.~\eqref{eq:gausswav}) have been successfully applied for fault detection~\cite{li2021waveletkernelnet}, and biological signal processing~\cite{burke2001ecg}.

In this paper, we consider the case $n=1$ and multiply the Gaussian $e^{-t^2/2}$ with an appropriate rational function, whose poles and zeros will act as free parameters in our construction. If the rational term satisfies the following criteria, the resulting family of wavelets will satisfy the conditions of theorem~\ref{thm:admis}. Consider first, the polynomials
\begin{equation}
\label{eq:baspols}
\begin{split}
    & r_{z}(t) := (t - z) (t + z)( t - \tilde{z}) (t + \tilde{z}) \\ &  (z \in \mathbb{C}, \ \Im(z) \neq 0, \ t \in \mathbb{R}),
\end{split}
\end{equation}
\noindent where $\Im(z)$ denotes the imaginary part of the complex parameter $z$ and $\tilde{z} = - \Re(z) + i \Im(z)$. We note that polynomials of the form $r_z(x) = (x - z)(x - \overline{z})$ may also be used if $\Re{z} = 0$ for the proposed construction. We denote the class of polynomials satisfying Eq.~\eqref{eq:baspols} by $\mathcal{P}_{e}$. Clearly, the elements of $\mathcal{P}_{e}$ are real valued even functions, who do not vanish on the real line. Using the class $\mathcal{P}_{e}$ we can introduce the class of rational functions
\begin{equation}
\label{eq:ratclass}
    \mathcal{V} := \left \{ v(t) = \frac{1}{\prod_{k=0}^{n-1} q_k(t)} : \ q_k \in \mathcal{P}_e, \ n \in \mathbb{N} \right\}.
\end{equation}
Then, the rational functions $v \in \mathcal{V}$ are also real and do not vanish anywhere on $\mathbb{R}$. Consider now the polynomial
\begin{equation}
    \label{eq:polpart}
    \begin{split}
    & P(t) := t \cdot \prod_{k=1}^{p} (t - t_k) ( t + t_k) \\ & ( t_k \in \mathbb{C} \setminus \{0 \}, \ p \in \mathbb{N}).    
    \end{split}
\end{equation}
With the conditions stated in Eq.~\eqref{eq:polpart}, $P$ is an odd polynomial satisfying $P(t) = -P(-t) \quad (t \in \mathbb{R})$. Now, it is possible to state the following definition. We note that for the biomedical application detailed in this work, we were particularly interested in real-valued wavelets (see section~\ref{sec:expr}). For this reason, and for simplicity we assumed $\Im(t_k) = 0$ in our experiments. It should also be noted, that complex variations of rational Gaussian wavelets are possible, as long as $P$ remains an odd function and $\mathcal{V}$ contains even functions, who do not vanish on the real line (see the proof of admissibility in the appendix).

\begin{definition}[Rational Gaussian wavelet]
The function $\Psi \in L_2(\mathbb{R})$ is called a rational Gaussian wavelet (RGW), if
\begin{equation}
    \label{eq:RGW}
    \psi^{\boldsymbol{\eta}}(t) := C(\boldsymbol{\eta}) \cdot P^{\boldsymbol{\eta}}(t) \cdot v^{\boldsymbol{\eta}}(t) \cdot e^{-t^2/2}, \ (t \in \mathbb{R}, \ \boldsymbol{\eta} \in \mathbb{C}^{p+n})
\end{equation}
where $P^{\boldsymbol{\eta}}$ is a polynomial satisfying Eq.~\eqref{eq:polpart}, $v^{\boldsymbol{\eta}} \in \mathcal{V}$ and $\boldsymbol{\eta}$ contains the poles $z$ of $v^{\boldsymbol{\eta}}$ and the non-zero roots of $P^{\boldsymbol{\eta}}$. Moreover, the constant $C(\boldsymbol{\eta}) \in \mathbb{R}$ is chosen such that $\|\psi^{\boldsymbol{\eta}} \|_{2} = 1$ holds. 
\end{definition}
Mother wavelets defined according to Eq.~\eqref{eq:RGW} clearly belong to $L_2(\mathbb{R})$, in fact the following theorem holds.
\begin{theorem}[Admissibility of rational Gaussian wavelets]
\label{thm:admisrgw}
    Suppose $\psi \in L_2(\mathbb{R})$ is defined according to Eq.~\eqref{eq:RGW}. Then, there exists a number $M \in \mathbb{R}_+$, such that
    \begin{equation*}
        \int_{-\infty}^{\infty} \frac{1}{|\xi|} \left| \widehat{\psi}(\xi) \right|^2 d\xi = M
    \end{equation*}
    holds.
\end{theorem}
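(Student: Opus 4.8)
The plan is to split the admissibility integral into a neighborhood of the origin and its complement, and to treat the two regions by entirely different arguments. The only possible source of divergence is the weight $1/|\xi|$ at $\xi = 0$, since for $\xi$ bounded away from the origin the weight is itself bounded and convergence is controlled solely by the decay of $\widehat{\psi}$. So the whole difficulty is concentrated at the origin.

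First I would establish that $\psi$ is in fact a Schwartz function, which will supply all the regularity and decay needed afterwards. By construction $\psi^{\boldsymbol{\eta}}(t) = C(\boldsymbol{\eta})\, P^{\boldsymbol{\eta}}(t)\, v^{\boldsymbol{\eta}}(t)\, e^{-t^2/2}$, where $P^{\boldsymbol{\eta}}$ is a polynomial and $v^{\boldsymbol{\eta}} = 1/\prod_k q_k$ with each $q_k \in \mathcal{P}_e$ nowhere vanishing on $\mathbb{R}$ (by Eq.~\eqref{eq:baspols}). Hence $P^{\boldsymbol{\eta}} v^{\boldsymbol{\eta}}$ is a rational function with no real poles, so it is $C^{\infty}(\mathbb{R})$ and grows at most polynomially, while $e^{-t^2/2}$ decays faster than any polynomial together with all its derivatives. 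Consequently $\psi \in \mathcal{S}(\mathbb{R})$, so $\widehat{\psi} \in \mathcal{S}(\mathbb{R})$ as well; in particular $\psi \in L_1(\mathbb{R}) \cap L_2(\mathbb{R})$ and $\widehat{\psi}$ is $C^{\infty}$, bounded, and rapidly decreasing.

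The crucial step is to show $\widehat{\psi}(0) = 0$, and here I would exploit the parity built into the construction: $P^{\boldsymbol{\eta}}$ is odd by Eq.~\eqref{eq:polpart}, each $q_k \in \mathcal{P}_e$ is even, so $v^{\boldsymbol{\eta}}$ is even, and $e^{-t^2/2}$ is even. Therefore $\psi$ is an odd function, and $\widehat{\psi}(0) = \int_{-\infty}^{\infty} \psi(t)\, dt = 0$. Since $\widehat{\psi}$ is $C^1$ and vanishes at the origin, Taylor's theorem (or the mean value estimate $|\widehat{\psi}(\xi)| \le \|\widehat{\psi}'\|_{\infty}\,|\xi|$) yields a constant $K$ and a $\delta > 0$ with $|\widehat{\psi}(\xi)| \le K|\xi|$ for $|\xi| \le \delta$, whence
\[
\int_{|\xi| \le \delta} \frac{|\widehat{\psi}(\xi)|^2}{|\xi|}\, d\xi \le K^2 \int_{|\xi| \le \delta} |\xi|\, d\xi = K^2 \delta^2 < \infty .
\]
This settles convergence near the origin, which is exactly the delicate part.

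For the remaining region the weight satisfies $1/|\xi| < 1/\delta$, so
\[
\int_{|\xi| > \delta} \frac{|\widehat{\psi}(\xi)|^2}{|\xi|}\, d\xi \le \frac{1}{\delta} \int_{-\infty}^{\infty} |\widehat{\psi}(\xi)|^2\, d\xi = \frac{1}{\delta}\, \|\widehat{\psi}\|_2^2 < \infty ,
\]
using $\widehat{\psi} \in L_2(\mathbb{R})$. Adding the two bounds shows the integral is finite, and since $\|\psi\|_2 = 1$ forces $\widehat{\psi} \not\equiv 0$, the value is positive, so it defines the desired $M \in \mathbb{R}_+$. I expect the main obstacle to be precisely the behavior at $\xi = 0$: taming the $1/|\xi|$ singularity is where the odd symmetry of the construction is indispensable, since it forces the zero-mean condition $\widehat{\psi}(0) = 0$ and hence the quadratic vanishing of $|\widehat{\psi}|^2$ that makes the weighted integrand integrable.
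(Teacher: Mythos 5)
Your proof is correct and follows essentially the same route as the paper's: exploit the odd symmetry of $\psi$ to get $\widehat{\psi}(0)=0$, use the smoothness of $\widehat{\psi}$ (which the paper derives from $\psi, t\psi \in L_1(\mathbb{R})$ and you derive from the slightly stronger observation that $\psi$ is Schwartz) to obtain the linear bound taming the $1/|\xi|$ singularity, and use $\widehat{\psi}\in L_2(\mathbb{R})$ away from the origin. Your write-up is in fact somewhat more explicit than the paper's about the near/far decomposition and about why $M>0$, but the underlying argument is the same.
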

The proof of theorem~\ref{thm:admisrgw} can be found in the appendix. The admissibility property guarantees that the proposed RGW wavelet transform can be inverted with respect to the $L_2$ norm. It should be noted, that theorem~\ref{thm:admisrgw} also holds, if we replace the set of possible rational terms $\mathcal{V}$ with the linear space
\begin{equation*}
    \mathcal{W} := \left\{ \sum_{j=0}^{k} c_k \cdot v_j(x) : \ v_j \in \mathcal{V}, \ k \in \mathbb{N} \right\},
\end{equation*}
however, for simplicity we state the results using the definition above. As seen in the appendix, the proof relies on two properties of the RGWs: symmetry (they are odd functions), and the fact that the denominator of $v$ in Eq.~\eqref{eq:RGW} does not dissappear anywhere. These properties however are also satisfied if $v$ is chosen from $\mathcal{W}$ instead of $\mathcal{V}$.

RGW mother wavelets depend on a number of free parameters. The modification of these parameters greatly influences the morphology of the analyzing wavelet. For example, moving the poles of $v^{\boldsymbol{\eta}}$ close to the real line leads to the appearance of sharp peaks in the mother wavelet. This spike-like morphology resembles a number of real life signals which appear in fault detection and biomedical applications (see e.g. Fig.~\ref{fig:ecg}). 

It is important to note, that despite the sharp oscillations shown in Fig.~\ref{fig:ecg}, RGWs retain smoothness and symmetry. This adaptive property of the proposed wavelets allows for the capture of important signal behavior using only a few wavelet coefficients. That is, a sparse wavelet coefficient representation of a large class of signals may be obtained by optimizing the poles of $v^{\boldsymbol{\eta}}$ and the zeros of $P^{\boldsymbol{\eta}}$ with respect to an objective function. This makes the continuous wavelet transform utilizing RGWs a powerful trainable feature extractor. Since RGWs are also analytic with respect to the poles of $v^{\boldsymbol{\eta}}$ and $P^{\boldsymbol{\eta}}$, gradient based methods (such as backpropagation algorithms) may be used to optimize these parameters. This allows for an easy integration of the scheme into existing machine learning models such as neural networks.

\subsection{Wavelet transform based machine learning models}

In this section, we discuss methods to extend existing machine learning models with the proposed RGW based continuous wavelet transform. Feature extraction schemes relying on the wavelet transform have been studied before (see e.g.~\cite{burke2001ecg, isermann2006fault, dozsa2019ensemble, eusipco, li2021waveletkernelnet}). In many works, the wavelet transform (discrete or continuous) is used as a static feature extractor~\cite{burke2001ecg, isermann2006fault, dozsa2019ensemble}. That is, a wavelet coefficient representation of the input signals is obtained, which is then passed to a machine learning algorithm to solve a regression or classification problem. The feature extraction step in this case is conducted separately from the training of the ML model. Since wavelet coefficients represent information about the signals' time-frequency profile, the magnitude of coefficients corresponding to fixed scale and translation parameters conveys interpretable information about signal behavior at given times and frequencies. 

In most CWT based machine learning methods however, a large number of wavelet coefficients is computed, then passed to the underlying machine learning algorithms. A more adaptive approach was proposed in~\cite{li2021waveletkernelnet}, where the following observation was made. For a fixed scale parameter $\lambda \in \mathbb{R} \setminus \{0\}$, the function $W_{\psi}f(\lambda, \cdot) : \mathbb{R} \to \mathbb{C}$ can be expressed by
\begin{equation*}
    W_{\psi}f(\lambda, \cdot) = \left(f \ast \psi_{\lambda, 0} \right) (\tau) = \int_{-\infty}^{\infty} f(t) \overline{\psi}(1/\lambda \cdot (t - \tau)) dt.
\end{equation*}
In other words, wavelet coefficients can be computed at fixed scales using convolution with the kernel $\psi_{\lambda, 0}$. Exploiting this observation, in~\cite{li2021waveletkernelnet} the use of convolution layers in neural networks with scaled wavelet kernels was proposed. The learnable parameters of these wavelet kernel convolution layers were the scales $\lambda_k$ and initial translation $\tau_k \quad (k=1,\ldots,M)$. Then, wavelet coefficients were approximated by computing the (discrete) convolutions
\begin{equation}
    \label{eq:wavekernel}
    \boldsymbol{f} \ast \boldsymbol{\psi_{\lambda_k, \tau_k}} \quad (k=1,\ldots,M),
\end{equation}
where $\boldsymbol{f} \in \mathbb{R}^N$ and $\boldsymbol{\psi_{\lambda_k, \tau_k}} \in \mathbb{R}^N$ denote $N$-point equidistant samplings of $f$ and $\psi_{\lambda_k, \tau_k}$ respectively. This approach means, that the input signals are represented with a large number of wavelet coefficients, therefore, in wavelet kernel net pooling layers are utilized to obtain a sparse signal representation. We note that the proposed RGW wavelets could easily be adapted for use with wavelet kernel net proposed in~\cite{li2021waveletkernelnet} and may perform well for certain applications. We plan to investigate such constructions in the future, however because of the following reason we opted for a variable projection based neural network architecture in this study (see section~\ref{sec:vp}). 

The main benefit of optimizing the zeros and poles of the rational term $P^{\boldsymbol{\eta}}(x) \cdot v^{\boldsymbol{\eta}}(x)$ in Eq.~\eqref{eq:RGW}, is that mother wavelet morphologies similar to the input signals may be obtained. This is beneficial, because in this way, precise reconstructions of the input signals may be obtained using only a few wavelet coefficients (see section~\ref{sec:vp} and Fig.~\ref{fig:ecg}). In this way, the computation of a large number of wavelet coefficients using the wavelet kernel net would be contrary to our efforts. 

\subsection{Variable projection and wavelet coefficients}
\label{sec:vp}

In this section we summarize a variable projection based methodology which allows for the inclusion of the RGW into machine learning, and specifically, neural network models. The discussed methodology was originally developed in~\cite{kovacs2022vpnet} for neural networks and~\cite{vpsvm} for support vector machines (SVMs). In addition, we use of our previous findings in~\cite{eusipco}, where we showed the benefit of approximating wavelet coefficients using variable projection operators. An important contribution of the current study in contrast to~\cite{eusipco}, is that instead of only learning appropriate translation and scale parameters of the wavelet coefficients and considering a fixed mother wavelet, in this work we propose variable projection transformations which can be used to also optimize the poles and zeros in Eq.~\eqref{eq:RGW}. 

In practical applications, signals are usually only available in a discretely sampled form. That is, instead of $f \in L_2(\mathbb{R})$, we may only consider its (equidistant) sampling $\boldsymbol{f} \in \mathbb{R}^N \ (N \in \mathbb{N})$. Consider an $m << N$ dimensional subspace in $\mathbb{R}^N$ spanned by the linearly independent basis vectors $\boldsymbol{u}_1,\ldots,\boldsymbol{u}_m \in \mathbb{R}^N$. Denote this subspace by
\begin{equation*}
    U := \spn \{ \boldsymbol{u}_1,\ldots,\boldsymbol{u}_m \} \subset \mathbb{R}^N
\end{equation*}
and by $\boldsymbol{\widehat{f}} \in U$ the element which minimizes
\begin{equation*}
    \| \boldsymbol{f} - \boldsymbol{u} \|_2^2 \quad (\boldsymbol{u} \in U)
\end{equation*}
It is well known that $\boldsymbol{\widehat{f}}$ exists uniquely and may be expressed by the orthogonal projection
\begin{equation*}
    \boldsymbol{\widehat{f}} = \mathcal{P}_U(\boldsymbol{f}) := \Phi \Phi^{+} \boldsymbol{f},
\end{equation*}
where the columns of $\Phi \in \mathbb{R}^{N \times m}$ coincide with the basis vector $\boldsymbol{u}_k \ (k=1,\ldots,m)$ and $\Phi^{+}$ denotes the Moore-Penrose pseudo inverse~\cite{golub1973differentiation}.  Suppose the basis vectors $\boldsymbol{u}_k$ depend on a real parameter vector $\boldsymbol{\eta} \in \mathbb{R}^q \ (q \in \mathbb{N})$ and thus
\begin{equation*}
    U(\boldsymbol{\eta}) := \spn \{ \boldsymbol{u}_1(\boldsymbol{\eta}), \ldots, \boldsymbol{u}_m(\boldsymbol{\eta}) \}.
\end{equation*}
Then, the projection operator $\mathcal{P}_{U(\boldsymbol{\eta})}$ also depends on $\boldsymbol{\eta}$ and is referred to as a \textit{variable projection operator}. Given a family of subspaces $G(\boldsymbol{\eta})$ spanned by $\boldsymbol{u}_k(\boldsymbol{\eta}) \ (k=1,\ldots,m)$, the best possible approximation (if it exists) of $\boldsymbol{f} \in \mathbb{R}^N$ can be found by minimizing
\begin{equation}
    \label{eq:r2}
    E_2(\boldsymbol{\eta}) := \|\boldsymbol{f} - \widehat{\boldsymbol{f}}(\boldsymbol{\eta}) \|_2^2 = \|\boldsymbol{f} - \Phi(\boldsymbol{\eta}) \Phi(\boldsymbol{\eta})^+ \boldsymbol{f} \|_2^2
\end{equation}
over $\boldsymbol{\eta} \in \mathbb{R}^q$. This nonlinear optimization problem  is usually referred to as a separable nonlinear least squares problem (SNLLS), since for any fixed value of $\boldsymbol{\eta}$, minimizing Eq.~\eqref{eq:r2} degrades to a least squares fitting task. SNLLS problems can be solved using gradient based optimization schemes due to the work of Golub and Pereyra, who in~\cite{golub1973differentiation, golub2003separable} provided an explicit formula for the derivatives $\frac{\partial \Phi(\boldsymbol{\eta})^+}{\partial \boldsymbol{\eta}}$. Using this formula, the gradients of $E_2(\boldsymbol{\eta})$ may be calculated analytically. It is important to note however, that the differentiability of $E_2$ assumes that the partial derivatives $\frac{\partial \Phi(\boldsymbol{\eta})}{\partial \boldsymbol{\eta}}$ exist (see e.g.~\cite{golub1973differentiation}). In other words, the basis functions $g_1(\boldsymbol{\eta}), \ldots, g_m(\boldsymbol{\eta})$ should be differentiable with respect to the parameters $\boldsymbol{\eta}$.

In~\cite{eusipco} we proposed to approximate continuous wavelet coefficients using variable projections. More precisely, let $m \in \mathbb{N}$ denote the number of wavelet coefficients used to represent the signal $\boldsymbol{f}$. Let

\begin{equation*}
    \boldsymbol{\eta} := [\lambda_1, \tau_1, \ldots, \lambda_m, \tau_m] \in \mathbb{R}^{2 \cdot m}
\end{equation*}
and let $\boldsymbol{\psi} \in \mathbb{R}^N$ be an equidistantly sampled analyzing wavelet from $L_2(\mathbb{R})$. Suppose the sampling was done over the effective support (a subset of $\mathbb{R}$ over which $\psi(x)$ significantly differs from $0$) of $\psi$. Define $\Psi(\boldsymbol{\eta}) \in \mathbb{R}^{N \times m}$ as the matrix, whose $k$-th column consists of the sampling of $\psi_{\lambda_k, \tau_k} \ (k=1,\ldots,m)$. Then, one has
\begin{equation*}
    W_{\psi}f(\lambda_k, \tau_k) \approx \left(\Psi(\boldsymbol{\eta})^+ \boldsymbol{f} \right)_k \quad (k=1,\ldots,m).
\end{equation*}
We can provide the following error estimate for the obtained approximation if the input signal $f$ is smooth and compactly supported. These properties can be assumed for a large class of signals encountered in applications (see e.g. section~\ref{sec:expr}). 

\begin{theorem}[Error of variable projection based continuous wavelet coefficients]
\label{them:errorest}
    Let $\psi \in L_2(\mathbb{R})$ be a fixed mother wavelet and suppose $f \in L_2(\mathbb{R})$ is continuously differentiable and compactly supported on $[a, b] \subset \mathbb{R}$. Let $a =: t_0 < t_1 < \ldots < t_{N-1} := b$ be an equidistant sampling of $[a, b]$ with $h := t_{1} - t_0$. In addition, consider $\boldsymbol{\eta} = (\lambda_1, \tau_1, \ldots, \lambda_m, \tau_m) \in \mathbb{R}^{2m}$ with $\lambda_k > 0 \ (k=1,\ldots,m)$. Finally, let $\boldsymbol{f}_k := f(t_k) \ (k=0,\ldots,N-1)$. Then we have
    \begin{equation}
    \label{eq:errest}
        \begin{split}
        & \left|W_{\psi}f(\lambda_k, \tau_k) - h \cdot \left(\Psi(\boldsymbol{\eta})^+ \boldsymbol{f} \right)_k \right| < \\
        & h \cdot \frac{M_1 (b - a)}{2} + h \cdot\|\boldsymbol{f}\|_{\infty} \|\Psi^{*}(\boldsymbol{\eta}) \|_{\infty} \cdot \\ &  \left( \frac{\kappa(G(\boldsymbol{\eta})) }{\|G(\boldsymbol{\eta})\|_{\infty}} + 1 \right),
        \end{split}
    \end{equation}
    where 
    \begin{equation*}
        M_1 := \max_{\xi \in [a, b], k=1,\ldots,m} \left|f'(\xi) \cdot \overline{\psi_{\lambda_k, \tau_k}}(\xi) \right|,
    \end{equation*}
    the columns of $\Psi(\boldsymbol{\eta})$ contain the samplings of $\psi_{\lambda_k, \tau_k} \ (k=1,\ldots,m)$ over $t_j \ (j=0,\ldots,N-1)$ and $G(\boldsymbol{\eta})$ is the Gram-matrix constructed form the columns of $\Psi(\boldsymbol{\eta})$. In Eq.~\eqref{eq:errest}, $\kappa(G(\boldsymbol{\eta}))$ denotes the condition number of $G(\boldsymbol{\eta})$ using the matrix infinity norm and $\Psi^{*}(\boldsymbol{\eta})$ denotes the adjungate of $\Psi(\boldsymbol{\eta})$.         
\end{theorem}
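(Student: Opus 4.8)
The plan is to interpolate a single intermediate quantity between the exact wavelet coefficient and its variable-projection estimate, namely the un-normalized discretized inner product $h\,(\Psi^{*}(\boldsymbol{\eta})\boldsymbol{f})_k = h\sum_{j=0}^{N-1}\overline{\psi_{\lambda_k,\tau_k}}(t_j)\,f(t_j)$, which is precisely the left-rectangle Riemann sum approximating the integral in Eq.~\eqref{eq:wavcoeff}. A single triangle inequality then splits the target quantity into a \emph{quadrature} part $\bigl|W_{\psi}f(\lambda_k,\tau_k) - h(\Psi^{*}(\boldsymbol{\eta})\boldsymbol{f})_k\bigr|$ and a \emph{Gram-correction} part $h\,\bigl|(\Psi^{*}(\boldsymbol{\eta})\boldsymbol{f})_k - (\Psi(\boldsymbol{\eta})^{+}\boldsymbol{f})_k\bigr|$. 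These two pieces are engineered to reproduce exactly the first and second summands of Eq.~\eqref{eq:errest}, so I would bound each one separately.

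For the quadrature part I would use that $f$ is continuously differentiable and supported on $[a,b]$, so that $W_{\psi}f(\lambda_k,\tau_k)=\int_a^b f\,\overline{\psi_{\lambda_k,\tau_k}}\,dt$ and the sum is a Riemann sum over the subintervals of length $h$ (continuity plus compact support forces $f(b)=0$, reconciling the point count). On each subinterval I would invoke the mean value theorem, $f(t)-f(t_j)=f'(\xi)(t-t_j)$, multiply by $\overline{\psi_{\lambda_k,\tau_k}}$ and integrate; bounding $|f'(\xi)\,\overline{\psi_{\lambda_k,\tau_k}}|$ by $M_1$ and using $\int_{t_j}^{t_{j+1}}(t-t_j)\,dt=h^2/2$, then summing and applying $(N-1)h=b-a$, yields the term $h\,M_1(b-a)/2$.

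The Gram-correction part rests on the factorization $\Psi(\boldsymbol{\eta})^{+} = G(\boldsymbol{\eta})^{-1}\Psi^{*}(\boldsymbol{\eta})$, valid because the sampled wavelets are assumed linearly independent (full column rank), where $G(\boldsymbol{\eta})=\Psi^{*}(\boldsymbol{\eta})\Psi(\boldsymbol{\eta})$ is exactly the stated Gram matrix. Writing $\Psi^{+}\boldsymbol{f} - \Psi^{*}\boldsymbol{f} = (G^{-1}-I)\Psi^{*}\boldsymbol{f}$, extracting the $k$-th coordinate, and passing to induced infinity norms gives $\|(G^{-1}-I)\Psi^{*}\boldsymbol{f}\|_\infty \le (\|G^{-1}\|_\infty+1)\,\|\Psi^{*}\|_\infty\,\|\boldsymbol{f}\|_\infty$, using $\|I\|_\infty=1$. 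Substituting the rearranged condition-number identity $\|G^{-1}\|_\infty=\kappa(G)/\|G\|_\infty$ and multiplying by $h$ reproduces the second summand verbatim.

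I expect the quadrature estimate to be the delicate step. The bound as stated controls the integrand's variation through $M_1=\max|f'\,\overline{\psi_{\lambda_k,\tau_k}}|$, i.e. only the $f'$ contribution to $(f\overline{\psi})'$, whereas a naive rectangle-rule remainder for the product also generates a $f\,\overline{\psi}'$ term; making the argument rigorous therefore requires organizing the remainder so that the variation of $\overline{\psi_{\lambda_k,\tau_k}}$ is either held pointwise within each subinterval or otherwise absorbed, rather than separately estimated. The remaining work — the pseudoinverse factorization and the condition-number identity — is routine linear algebra, whose sole nontrivial hypothesis is the linear independence of the sampled basis vectors guaranteeing invertibility of $G(\boldsymbol{\eta})$.
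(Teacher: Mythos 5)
Your decomposition is exactly the one the paper uses: the same intermediate Riemann sum $h\,(\Psi^{*}(\boldsymbol{\eta})\boldsymbol{f})_k$, the same triangle-inequality split into a quadrature error and a Gram-correction term, the same normal-equations factorization $\Psi^{+}=G^{-1}\Psi^{*}$ under the full-rank assumption, and the same identity $\|G^{-1}\|_{\infty}=\kappa(G)/\|G\|_{\infty}$, so the proposal reproduces the paper's proof essentially step for step. The subtlety you flag in the quadrature step is genuine, but it is not a defect of your argument relative to the paper's: the paper simply asserts the bound $h\,M_1(b-a)/2$ as ``following from the mean value theorem regarding integration,'' and likewise does not account for the $f\cdot\bigl(\overline{\psi_{\lambda_k,\tau_k}}\bigr)'$ contribution to the rectangle-rule remainder that you correctly observe is not controlled by $M_1$ alone.
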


The proof of theorem~\ref{them:errorest} is provided in the appendix.  A consequence of this theorem, is that the proposed approximations tend to the actual wavelet coefficients as the sampling frequency of $f$ increases.

There is a number of arguments which support the approximation of wavelet coefficients using the above-described variable projection based methodology in ML models. Firstly, in contrast to the Wavelet Kernel Net approach, there is no need for pooling and dropout operations to obtain a sparse representation of the input. Furthermore, by minimizing~\eqref{eq:r2}, one obtains wavelet coefficients which are optimal with respect to the reconstruction error in the least squares sense. To demonstrate the effectiveness of calculating wavelet coefficients in this way, Fig.~\ref{fig:ecg} shows the reconstruction of an ECG signal using RGW wavelets (where the variable projection operator is defined as described in section~\ref{subsec:rgwvp}) using $8$ wavelet coefficients.
\begin{figure}[!h]
    \centering
    \includegraphics[trim={0, -3mm, 0, 0}, clip, width=0.49\linewidth]{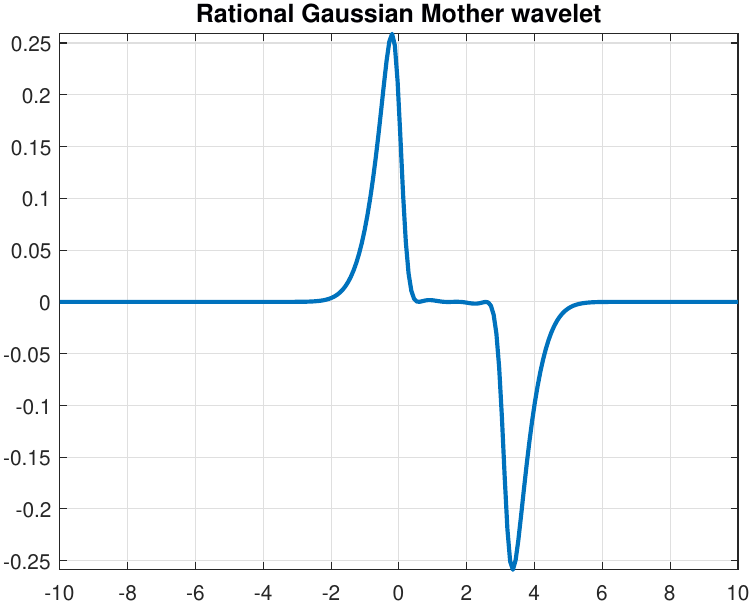}
    \includegraphics[trim={0, 1mm, 0, 0}, clip, width=0.49\linewidth]{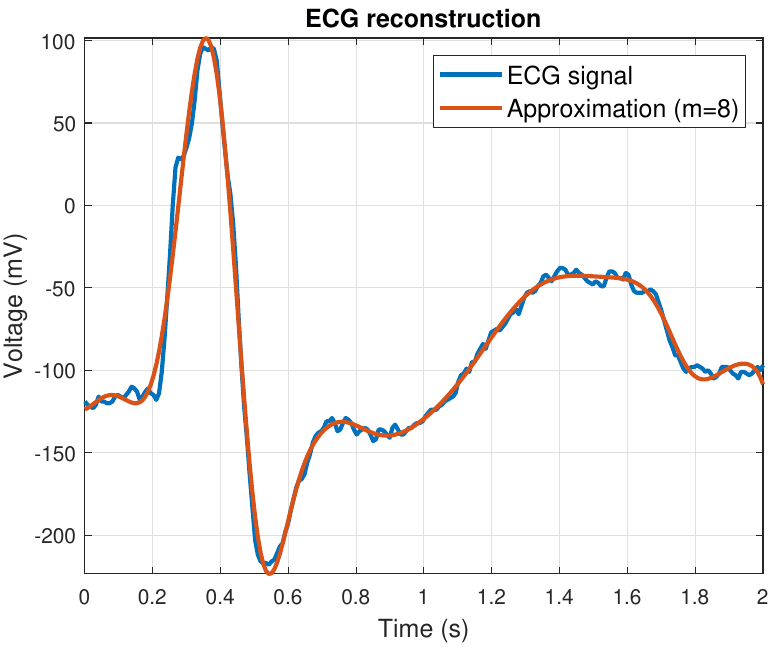}
    \caption{LEFT: A rational Gaussian motherwavelet. Here $n=3$ poles and $p=10$ zeros were used. RIGHT: Reconstruction of ECG signal using $m=8$ wavelet coefficients.}
    \label{fig:ecg}
\end{figure}
Finally, since $E_2(\boldsymbol{\eta})$ may be optimized using gradient based methods, variable projection operators can be included into neural network architectures (and other machine learning models~\cite{vpsvm}) as separate layers and trained using backpropagation schemes. These types of neural networks, often referred to as VP-Net, were introduced in~\cite{kovacs2022vpnet} and have been successfully used with a number of different varaible projection operators to solve environmental recognition~\cite{roadabnorm}, fault detection~\cite{vpsvm} and biomedical signal processing problems~\cite{kovacs2022vpnet, vpsvm}. 

\subsection{The RGW-VP layer}
\label{subsec:rgwvp}

Suppose we would like to solve a classification (or regression) problem using supervised learning. Consider the mappings $\mathcal{P}_{\Psi(\boldsymbol{\eta})} : \mathbb{R}^N \to \mathbb{R}^N$ and $\mathcal{C}_{\Psi(\boldsymbol{\eta})} : \mathbb{R}^N \to \mathbb{R}^m$
\begin{equation}
    \label{eq:rgwvp}
    \begin{split}
        & \mathcal{P}_{\Psi(\boldsymbol{\eta})}(\boldsymbol{f}) := \Psi(\boldsymbol{\eta}) \left( \Psi(\boldsymbol{\eta})^{+} \boldsymbol{f}\right), \\
        & \mathcal{C}_{\Psi(\boldsymbol{\eta})}(\boldsymbol{f}) := \Psi(\boldsymbol{\eta})^{+} \boldsymbol{f},
    \end{split}
\end{equation}
where 
\begin{equation}
\label{eq:eta}
    \boldsymbol{\eta} :=
    \left[\begin{array}{c}
         \lambda_1  \\
         \tau_1  \\
         \vdots \\
         \lambda_m \\
         \tau_m \\
         t_1 \\
         \vdots \\
         t_p \\
         a_1 \\
         b_1 \\
         \vdots \\
         a_n \\
         b_n
    \end{array}\right]
    \in \mathbb{R}^{2m + p + 2\cdot n}, \quad (m,p,n \in \mathbb{N})
\end{equation}
and the columns of $\Psi(\boldsymbol{\eta})$ are samplings of rational Gaussian wavelets defined according to Eq.~\eqref{eq:RGW}. In Eq.~\eqref{eq:eta}, $\lambda_k, \tau_k$ denote the learnable scales and translations, while $t_j$ stand for the zeros of $P^{\boldsymbol{\eta}}$ and $a_i, b_i$ determine the real and imaginary parts of the poles of $v^{\boldsymbol{\eta}}$ in Eq.~\eqref{eq:RGW}. The mapping $\mathcal{C}_{\Psi(\boldsymbol{\eta})}$ in Eq.~\eqref{eq:rgwvp} will be referred to as an "RGW-VP" layer henceforth.

We would like to optimize $\boldsymbol{\eta}$ together with the weights of an underlying neural network. This would ensure that features (wavelet coefficients) of the input signal are learned by the RGW-VP layer which are ideal from the point of view of the underlying classification or regression problem. In addition, this would enable the model to learn the appropriate mother wavelet to perform the transformation with by tuning the parameters $t_j$, $a_i$ and $b_i$ in $\boldsymbol{\eta}$. In~\cite{kovacs2022vpnet}, so-called VP-NETs were proposed, which consist of a variable projection layer (performing adaptive feature extraction), followed by a number of fully connected linear layers (or any other layer type associated with common neural network architectures). The main idea is that since the partial derivatives satisfy  $\frac{\partial \mathcal{C}_{\Psi(\boldsymbol{\eta})}}{\partial \boldsymbol{\eta}} = \frac{\partial \Psi(\boldsymbol{\eta})^+}{ \partial \boldsymbol{\eta} } \boldsymbol{f}$ and $\frac{\partial \Psi(\boldsymbol{\eta})^+}{ \partial \boldsymbol{\eta} }$ can be expressed analytically. The required formulas were first obtained by Pereyra and Golub in~\cite{golub1973differentiation} (see also~\cite{kovacs2022vpnet, vpsvm} for their application in a machine learning context). Thus, it suffices to calculate the partial derivatives $\frac{\partial \Psi(\boldsymbol{\eta})}{\partial \boldsymbol{\eta}}$ in order to supplement a neural network (or other ML model trained by backpropagation) with an RGW-VP layer.

First, we consider the derivatives $\partial \psi^{\boldsymbol{\eta}}_{\lambda_k, \tau_k} / \partial \lambda_k$ and $\partial \psi^{\boldsymbol{\eta}}_{\lambda_k, \tau_k} / \partial \tau_k$. Regardless of the mother wavelet $\psi^{\boldsymbol{\eta}}$, these are given by
\begin{equation}
    \label{eq:deraff}
    \begin{split}
        & \frac{\partial \psi_{\lambda_k, \tau_k}^{\boldsymbol{\eta}}}{\partial \lambda_k} = -\frac{1}{2} \cdot \lambda_k^{-3/2} \cdot \psi_{\lambda_k, \tau_k}^{\boldsymbol{\eta}}(t) - \frac{t - \tau_k}{\lambda_k^2} \cdot \left(\psi_{\lambda_k, \tau_k}^{\boldsymbol{\eta}}(t) \right)' \\
        & \frac{\partial \psi_{\lambda_k, \tau_k}^{\boldsymbol{\eta}}}{\partial \tau_k} = - \left(\psi_{\lambda_k, \tau_k}^{\boldsymbol{\eta}}(t) \right)'/\lambda_k,
    \end{split}
\end{equation}
where $\left(\psi^{\boldsymbol{\eta}}_{\lambda_k, \tau_k}(t) \right)'$ denotes the derivative of $\psi_{\lambda_k, \tau_k}^{\boldsymbol{\eta}}$ with respect to $t$. 

In order to give the required derivatives with respect to the zeros of $P^{\boldsymbol{\eta}}$ and the poles of $v^{\boldsymbol{\eta}}$, first consider that the elementary polynomials from Eq.~\eqref{eq:baspols} may be written as
\begin{equation*}
    r_{z}(t) = t^4 + t^2\cdot 2(b^2 - a^2) + a^4 + 2 a^2 b^2 + b^4,
\end{equation*}
where $a := \Re(z)$ and $\b := \Im(z)$. Recall, that in section~\ref{sec:rgw}, we required that $b > 0$. One way to ensure this, is to introduce $\hat{b} := b^2 + \varepsilon$, for some $\varepsilon > 0$ and modify the formulas presented here accordingly. Thus, using the notation $z_k := a_k + i b_k$, we obtain
\begin{equation*}
\begin{split}
    & \frac{\partial r_{z_k}(t)}{ \partial a_k} = -4 \cdot \left( a_k t^2 - a_k^3 - a_k b_k^2 \right) \\
    & \frac{\partial r_{z_k}(t)}{ \partial b_k} = 4 \cdot \left( b_k t^2 + b_k^3 + a_k^2 b_k \right)
\end{split}    
\end{equation*}
Notice that the $v^{\boldsymbol{\eta}}$ can be written as the finite product of $r_{z_k}(t)^{-1}$ terms. The partial derivatives of these terms are given by 
\begin{equation*}
    \begin{split}
        & \frac{\partial r_{z_k}(t)^{-1}}{\partial a_k} = -r_{z_k}(t)^{-2} \cdot \frac{\partial r_{z_k}(t)}{\partial a_k} \\
        & \frac{\partial r_{z_k}(t)^{-1}}{\partial b_k} = -r_{z_k}(t)^{-2} \cdot \frac{\partial r_{z_k}(t)}{\partial b_k}.
    \end{split}
\end{equation*}
Using this, we obtain the following partial derivatives for $v^{\boldsymbol{\eta}}$ with respect to $a_k$ and $b_k$:
\begin{equation}
    \label{eq:partv}
    \begin{split}
        & \frac{\partial v^{\boldsymbol{\eta}}(t)}{\partial a_k} = \frac{\partial r_{z_k}(t)^{-1}}{\partial a_k} \cdot \prod_{j=0, j \neq k }^{n-1} r_{z_j}(t) \\
        & \frac{\partial v^{\boldsymbol{\eta}}(t)}{\partial b_k} = \frac{\partial r_{z_k}(t)^{-1}}{\partial b_k} \cdot \prod_{j=0, j \neq k }^{n-1} r_{z_j}(t).
    \end{split}
\end{equation}
Finally, using Eq.~\eqref{eq:partv} the formula for the partial derivatives of the mother wavelet can be given as
\begin{equation}
    \label{eq:dpsiab}
    \begin{split}
        & \frac{\partial \psi_{\lambda_i, \tau_i}^{\boldsymbol{\eta}}(t)}{\partial a_k} = P^{\boldsymbol{\eta}}(t) \frac{\partial v^{\boldsymbol{\eta}}(t)}{\partial a_k} e^{-t^2/2} \\
        & \frac{\partial \psi_{\lambda_i, \tau_i}^{\boldsymbol{\eta}}(t)}{\partial a_k} = P^{\boldsymbol{\eta}}(t) \frac{\partial v^{\boldsymbol{\eta}}(t)}{\partial b_k} e^{-t^2/2}.
    \end{split}
\end{equation}
Note that in Eq.~\eqref{eq:dpsiab}, for the sake of simplicity, we abstained from adding the effect of the normalizing factor $C(\boldsymbol{\eta})$. The partial derivatives also considering this factor can be easily calculated using the theorem of parametric integration and the above equations, however we found (see section~\ref{sec:expr}), that in applications good results could be achieved without it. The partial derivatives of $\psi^{\boldsymbol{\eta}}_{\lambda_i, \tau_i}$ with respect to the zeros of $P^{\boldsymbol{\eta}}$ can also be easily calculated:
\begin{equation}
    \label{eq:dpsitk}
    \frac{\partial \psi_{\lambda_i, \tau_i}^{\boldsymbol{\eta}}}{\partial t_k}(t) = - \prod_{j=1, j\neq k}^{p}(t-t_j)(t+t_j) \cdot 2 t_k \cdot v^{\boldsymbol{\eta}}(t) \cdot e^{-t^2/2}.
\end{equation}
Equations~\eqref{eq:dpsiab} and~\eqref{eq:dpsitk} can be used to implement the backpropagation step for the RGW-VP layer. Fig.~\ref{fig:rgwvpnet} shows a schematic of the proposed neural network arhitecture using the RGW-VP layer for automatic feature extraction. The main benefit of approximating wavelet coefficients using variable projection is that for any fixed $\boldsymbol{\eta}$ parameter vector, the computed coefficients minimize the error of the reconstructed signal in the least-squares sense. In order to ensure high quality approximations of the input signals and to avoid the problem of vanishing gradients, in VP-based ML methods the following regularization term is added to the loss function during training (see~\cite{kovacs2022vpnet, eusipco}):
\begin{equation*}
    J_{VP}(\boldsymbol{\eta}) := \frac{\alpha}{s} \sum_{i=1}^s \frac{\| \boldsymbol{f}_i - \mathcal{P}_{\Psi(\boldsymbol{\eta})}\boldsymbol{f}_i \|_2^2}{\| \boldsymbol{f}_i \|_2^2}.
\end{equation*}
Here $\mathcal{P}_{\Psi(\boldsymbol{\eta})}\boldsymbol{f}$ is defined according to Eq.~\eqref{eq:rgwvp}, $\alpha > 0$ is a hyper-parameter and $s$ denotes the total number of input signals in the training set.
\begin{figure*}[h!]
    \centering
    \includegraphics[width=0.8\linewidth]{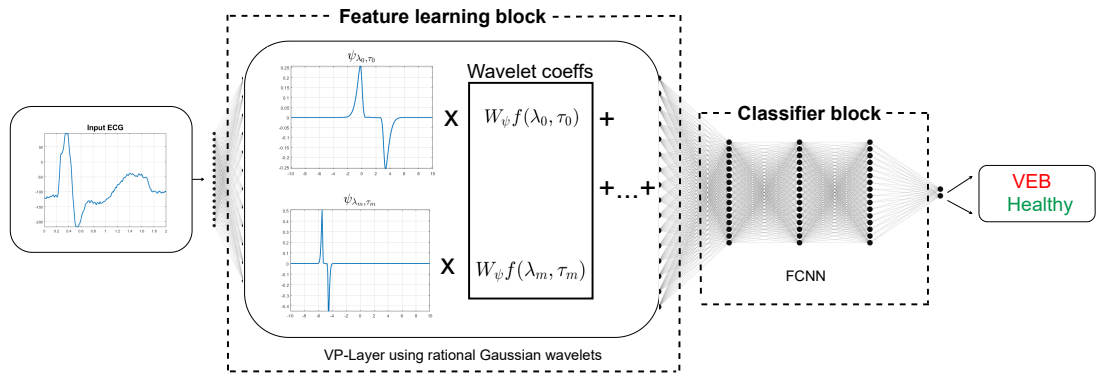}
    \caption{Schematic of the proposed, rational Gaussian wavelet based neural network architecture. The first layer acts as an automatic feature learning block passing the learned wavelet coefficients to an underlying fully connected network.}
    \label{fig:rgwvpnet}
\end{figure*}

\section{A case study: ECG classification}
\label{sec:expr}

Electrocardiogram (ECG) is an effective and low-cost method for monitoring heartbeats. Detecting heart arrhythmia can sometimes be challenging due to the infrequent appearance of certain types of arrhythmias, which require long-term diagnostic periods. To aid cardiologists and people living in remote areas, the automatic classification of heartbeat signals has been researched for decades \cite{de2004automatic}. A healthy ECG heartbeat consists of three segments: the P wave, the QRS complex, and the T wave, respectively. The P wave corresponds to atrial depolarization, the QRS complex represents the depolarization of the ventricles, and the T wave represents the repolarization of the ventricles \cite{thaler2021only}. Cardiologists examine the properties of these segments, such as interval length, wave shape, or wave presence, to classify arrhythmias \cite{thaler2021only}. For simplicity of notation, any numerical approximation of the integral~\eqref{eq:wavcoeff} will be referred to as "CWT coefficients" henceforth, including the variable projection based approximation discussed in the previous section.
\begin{figure}
    \centering
    \includegraphics[width=1.0\linewidth]{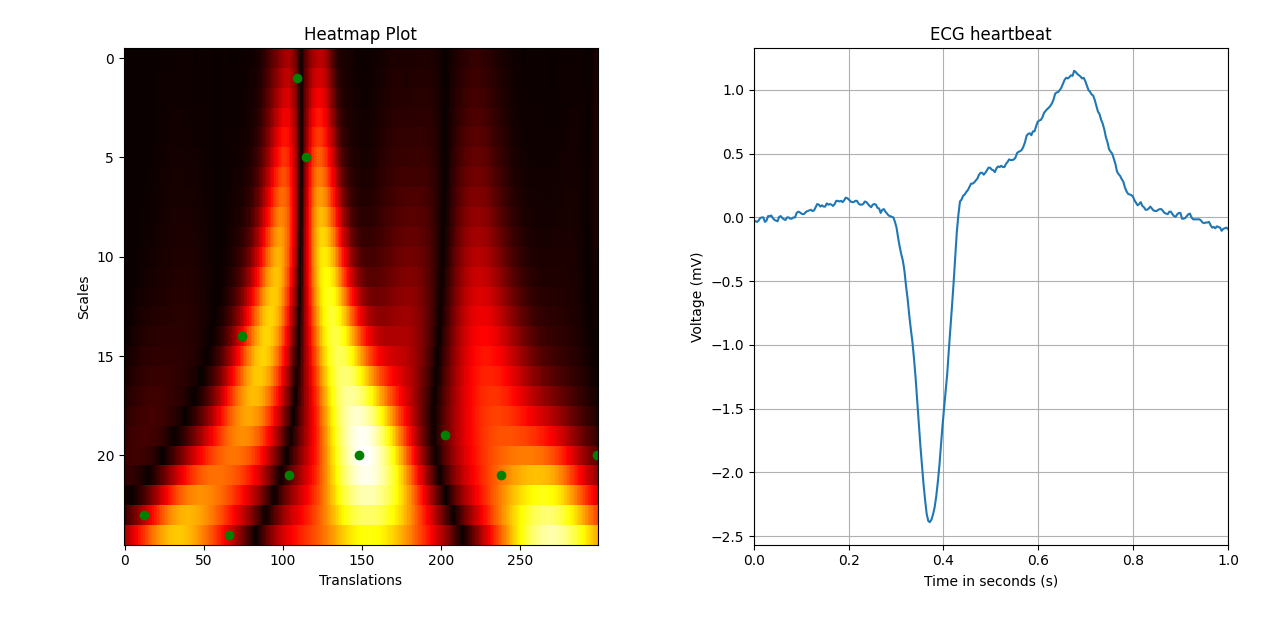}
    \caption{CWT coefficients of a VEB signal represented by their scalogram and a VEB signal plot, respectively. Trained coefficients marked with green dots on the scalogram.}
    \label{fig:scalogram}
\end{figure}

There are other methods in the literature, which used CWT coefficients to classify ECG heartbeats. In most approaches so-called scalograms are used. These can be given as the matrices $W \in \mathbb{R}^{r \times k}$, where $W_{r,k} := | W_{\psi} f (\lambda_r, \tau_k) | \ (r,k \in \mathbb{N})$. For example, in \cite{al2018convolutional}, a static feature extraction scheme was applied on the input heartbeats. Namely, the scalograms with respect to three different mother wavelets were computed and represented as a tensor of size $3 \times r \times k$. Clearly, such a tensor can be thought of as an RGB image. This idea was exploited to pass the preprocessed signals (tensors) to a deep convolutional (CNN) network architecture, which expects RGB images as inputs. In \cite{ fujita2019decision}, the Morse analytic mother wavelet was used to compute scalograms of groups of successive ECG heartbeats, then they were fed into a deep CNN model. Another approach used scalograms of heartbeats and interval features as input to a CNN in order to classify arrhythmias \cite{wang2021automatic}. In this article, it was found that the Ricker wavelet was the most suitable for this task. The authors of~\cite{wang2021automatic} claim that this is due to the shape similarities between QRS complexes and the Ricker wavelets. 

One common property of these methods is that they compute thousands of CWT coefficients providing a redundant signal representation, and in turn require deep CNNs to solve the classification task. CNNs extract features automatically and show state-of-the-art performance for various image or signal processing tasks. As we mentioned earlier, one disadvantage of such deep learning models is that they use a large number of parameters and thus are difficult to interpret. CWT-VP layers achieve sparse signal representation by approximating only the  CWT coefficients necessary for the classification task. Sparse signal representation also allows lower model complexity. Additionally, CWT-VP layers do not require pooling or dropout layers to reduce input dimensions.  

In \cite{kovacs2022vpnet} VP-Net was used with Adaptive Hermite functions to classify ECG heartbeats. Adaptive Hermite functions were utilized for ECG processing in~\cite{dozsa2016ecg}, although various other parameterizations of Hermite functions have long been used for approximating these kinds of signals. Hermite-functions form a complete and orthonormal system in $L_2(\mathbb{R})$ and show similar morphology to healthy QRS complexes. For this reason, the first few Hermite-Fourier coefficients may be used to provide a good approximation of QRS complexes. Adaptive Hermite-functions consider the affine argument transforms of the classical Hermite functions. In this case, the VP-Layer learns a single dilation and translation parameter pair, which describes the entire function system.

In our experiments, we classified ventricular ectopic beat (VEBs) signals, which are the most common heartbeat irregularities \cite{de2004automatic}. We used heartbeats from the MIT-BIH arrhythmia database \cite{moody2001impact}, provided by PhysioNet \cite{goldberger2000physiobank}. In this database, a total of 48 recordings from different patients can be found. We used the proposed datasets from~\cite{de2004automatic} by De Chazal et al. for training and testing respectively. These datasets are unbalanced, since in real life scenarios, normal heartbeats are more common, than abnormal ones. The datasets contains approximately 50,000 heartbeats. We represent every heartbeat with 300 data samples, which includes the P segment, QRS complex and T segment. It is important to note, that many current approaches only consider the QRS complex (see~\cite{luz2016ecg}), however as we point out in section~\ref{subsec:interp}, biological markers such as the shape of the P-wave are also used by cardiologists to recognize VEBs.

In our experiment we considered the proposed RGW-VP-Net architecture using $m=10$ wavelet coefficients, $p=3$ zeros and $n=4$ poles. We compared the performance of the proposed model with a VP-Net using the adaptive Hermite system as introduced in~\cite{kovacs2022vpnet}. We note that the Hermite expansion based VP-Net also received the entire heartbeat as input and used $m=10$ Hermite-functions. In addition we considered a variable projection based wavelet network, where the VP-wavelet transformation from Eq.~\eqref{eq:rgwvp} was performed using the Ricker wavelet. This model architecture was included to measure the effect of the proposed RGW wavelets, where the shape of the mother wavelet can also be learned. In the Ricker-wavelet case, the learnable parameters of the VP-layer consisted of the dilation-translation parameter pairs associated with each wavelet coefficient. Again, $m=10$ coefficients were considered. Finally, we considered the performance of well-known methods from~\cite{luz2016ecg}. The classification results are shown in Table \ref{tab:experiments_ecg}. In addition to overall accuracy, common performance metrics, sensitivity/precision (Se)  and Positive predictivity/recall (+P), also calculated:
\begin{equation}
    Se = \frac{TP}{TP + FN} \quad \text{and} \quad {+P} = \frac{TP}{TP+FP}
\end{equation}
where TP, FN, FP are true positive, false negative and  false positive, respectively. These metrics were considered, since the dataset was highly biased towards healthy signals.

\begin{table}[!h]
    \centering
    \begin{tabular}{cccccc}
      	\toprule
        \multirow{ 2}{*}{Layer } & \multirow{ 2}{4em}{Total accuracy } & \multicolumn{2}{c}{Normal} &  \multicolumn{2}{c}{VEB}  \\ &  & Se & +P & Se & +P\\
		\midrule
   {\bf RGW-VP} & {\bf 98.51\%} & {\bf 99.60\%} & {\bf 98.79\%}  & {\bf 85.48\%} & {\bf 94.71\%} \\
  {\bf Ricker-VP}  & 95.69\%  & 99.37\% & 96.1\%   & 51.68\% &  87.27\%\\
		  {\bf Hermite-VP}  &  97.42\% & 98.65\%   & 98.56\% &  82.75\% & 83.63\% \\ \midrule
    {\bf Literature~\cite{luz2016ecg}} & - & 80-99 \% & 85-99 \% & 77-96 \% & 63-99 \% \\
		\bottomrule
    \end{tabular}
    \vspace{0.1cm}
    \caption{VEB classification results on the MIT-BiH Arrythmia dataset.}
    \label{tab:experiments_ecg}
\end{table}

Based on the results in table~\ref{tab:experiments_ecg}, the proposed RGW-VP model outperformed the other considered approaches across all metrics. It should be noted, that it also perfomred at the level of the best methods from the literature~\cite{luz2016ecg}, some of which used much more complex, deep learning models for classification. From the medical point of view, the VEB Se and VEB +P metrics are particularly important, as they provide information about how reliably the methods recognize diseased waveforms. Unfortunately, in this case the Ricker-VP found just half of the VEB signals. The Hermite function based VP-Net achieved better results than Ricker-VP and comparable results to RGW-VP, but RGW-VP outperforms Hermite-VP in the VEB +P metric by approximately 10\%. It should also be emphasized, (see also section~\ref{subsec:interp}), that the learned parameters of RGW-VP are highly interpretable when compared to non-wavelet based approaches.

In order to measure the effect of different feature learning layers, all models used the same underlying fully-connected architecture. The hyperparameters of the model tuned using a large, manual grid search. Sparse signal representations achieved by the VP automatic feature extraction layers reach peak performance at low model complexity \cite{kovacs2022vpnet}. In the first layer we computed $m=10$ coefficients in all cases.  We tested Ricker and Hermite-VP with more coefficients, but the performance did not outperform RGW-VP. Each model used a single hidden linear layer with 15 neurons and with a ReLu activation function after the first layer. The one-dimensional output layer used the Sigmoid function for binary classification. 

\subsection{Interpretability of proposed models}
\label{subsec:interp}

In this section we examine the interpretability of the trained parameters of the first layer. According to the literature, cardiologist recognise VEBs from their wide and bizarre QRS complex morphologies and the absence of a P wave in the heartbeat \cite{thaler2021only}. This observation supports the use of the entire heartbeat as input to the models, unlike previous methods (see e.g.~\cite{de2004automatic, kovacs2022vpnet}), which only considered QRS complexes. The right column of Fig.~\ref{fig:scalogram} illustrates a VEB haertbeat. In the left column of the same figure, we show the scalogram of the heartbeat computed using the RGW mother wavelet learned in our experiment. The scalogram was acquired by computing all wavelet coefficients with the RGW-VP layer at every time point and 25 scales between  $\lambda_1=0.1$ and $\lambda_{25}=1.5$. We can see that there are dominant coefficients at lower frequencies (higher scales) because of the wide QRS complex of the heartbeat. If we examine the positions of the trained coefficients - marked with green dots - we conclude that they correspond to lower frequency information of VEB signals. At the QRS complex time segment there are also trained coefficients at high frequency values (i.e., at low scales) to aid the classification of healthy ECG heartbeat signals. Fig.~\ref{fig:wavelets} illustrates the learned dilated and translated RGW wavelets used to extract the features from the ECG data. The figure shows that the wavelets correspond to segments which cardiologist examine to classify VEB signals. The dark blue RGW wavelet corresponds to the P wave, light green wavelet corresponds to a wide QRS complex of a VEB signal, orange and light blue wavelets correspond to a QRS complex of a healthy ECG heartbeat. In Figure \ref{fig:coeffs} we can see a healthy and an unhealthy ECG heartbeat. In the bottom row, the value of the wavelet coefficients is also illustrated. In the case of a healthy ECG heartbeat, the coefficients which correspond to the P wave (in time) were relatively large. In turn, a VEB signal has a small dark blue coefficient value, because of the absence of the P wave. In the case of VEB signals, we can clearly see that the light green coefficient is bigger than coefficients corresponding to healthy QRS complexes.
\begin{figure}
    \centering
    \includegraphics[width=1\linewidth]{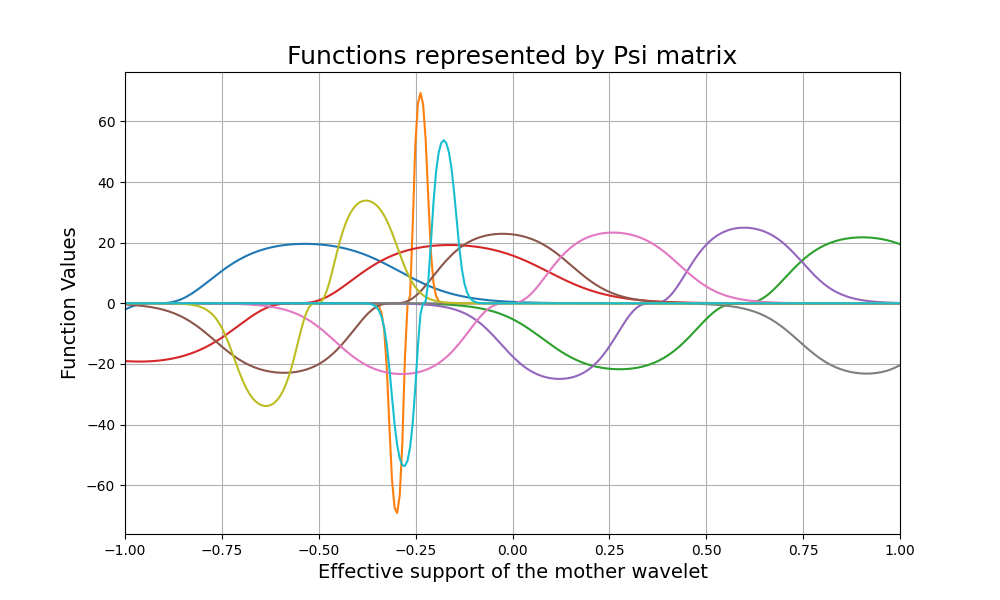}
    \caption{Dilated and translated RGW wavelets used to extract features from ECG signals.}
    \label{fig:wavelets}
\end{figure}

\begin{figure}
    \centering
    \includegraphics[width=1\linewidth]{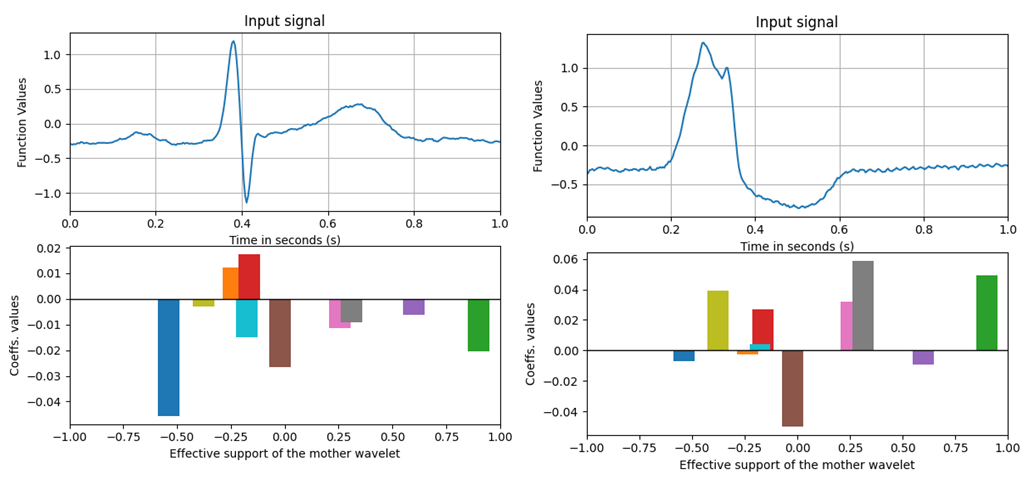}
    \caption{A healthy and VEB ECG heartbeat signal. Below there are the values of their CWT coefficients plotted at maximum function values of their corresponding wavelets. Colors corresponds to the wavelets in Fig.~\ref{fig:wavelets}.}
    \label{fig:coeffs}
\end{figure}

\section{Conclusion}
\label{sec:conc}

In this work, we proposed a new analytic family of wavelets which we referred to as rational Gaussian wavelets. The main idea behind the RGW wavelets is to multiply the Gaussian with an appropriate rational term, whose poles and zeros act as free parameters that govern the morphology of the mother wavelet. We demonstrated that using RGW wavelet coefficients, non-symmetric and sharp waveforms (such as ECG signals) can be reconstructed using only a few wavelet parameters, provided that the mother wavelet morphology is chosen appropriately. This allows for a precise and sparse representation of a large class of signals. Furthermore, we proved the admissibility of the proposed RGW wavelets and considered an appropriate numerical approximation of wavelet coefficients which minimize the error of the reconstructed signal. We showed that this method of approximating RGW coefficients using variable projection can be effectively included in neural network architectures to solve a real life signal processing problem, namely VEB detection in ECG signals. Our results show that the proposed methodology performs this task at the level of well-established alternatives from the literature, while providing interpretable and simple model architectures. Importantly, the proposed RGW-VP neural network was shown to learn wavelet coefficients, whose time parameter corresponds to ECG segments that are crucial in the detection of VEBs according to cardiology literature. In this way, the proposed transformation provides highly interpretable features while obtaining a high quality sparse representation of ECG data. Due to these advantages, RGW-VP was able to outperform other model-driven architectures, such as wavelet based neural networks using a fixed analyzing wavelet and variable projection networks using an adaptive Hermite expansion of the signals.

The results disclosed in this work demonstrate the potential in the proposed RGW-VP architecture. In our future work, we would like to find other potential applications of the RGW-VP scheme. In addition, we would like to investigate the possibility of using RGW wavelets for other tasks as well, such as feature extractors for industrial fault detection problems. In addition, we would like to investigate whether this construction may be useful in other settings entirely, such as univariate function layers of Kolmogorov-Arnold Networks.

\appendices
\section{Proofs of statements~\ref{thm:admisrgw} and~\ref{them:errorest}}
\label{sec:proofs}

\begin{thm}[Admissibility of rational Gaussian wavelets (Theorem~\ref{thm:admisrgw})]
    Suppose $\psi \in L_2(\mathbb{R})$ is defined according to Eq.~\eqref{eq:RGW}. Then, there exists a number $M \in \mathbb{R}_+$, such that
    \begin{equation*}
        \int_{-\infty}^{\infty} \frac{1}{|\xi|} \left| \widehat{\psi}(\xi) \right|^2 d\xi = M
    \end{equation*}
    holds.
\end{thm}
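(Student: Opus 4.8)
The plan is to exploit the two structural features emphasized right after the statement: $\psi$ is an odd function, and its Fourier transform is a smooth, rapidly decaying function. The admissibility integral $\int_{-\infty}^{\infty} |\widehat{\psi}(\xi)|^2/|\xi|\, d\xi$ has only two potential sources of divergence, namely the singularity of $1/|\xi|$ at the origin and the behavior as $\xi \to \pm\infty$, so I would control these two regimes separately.

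First I would establish that $\psi \in \mathcal{S}(\mathbb{R})$, the Schwartz class. Indeed, $P^{\boldsymbol{\eta}}$ is a polynomial, and $v^{\boldsymbol{\eta}} \in \mathcal{V}$ is smooth and bounded on $\mathbb{R}$ since its denominator is a product of elements of $\mathcal{P}_e$, which by construction never vanish on the real line. Hence $P^{\boldsymbol{\eta}} \cdot v^{\boldsymbol{\eta}}$ is smooth with at most polynomial growth, and multiplying by $e^{-t^2/2}$ yields a Schwartz function. Consequently $\widehat{\psi} \in \mathcal{S}(\mathbb{R})$ as well; in particular $\widehat{\psi}$ is continuous, infinitely differentiable, and decays faster than any polynomial. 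The tail estimate is then immediate: for $|\xi| \geq 1$ we have $|\widehat{\psi}(\xi)|^2 / |\xi| \leq |\widehat{\psi}(\xi)|^2$, which is integrable on $\{|\xi| \geq 1\}$ because $\widehat{\psi} \in L_2(\mathbb{R})$.

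Near the origin the key observation is that $\psi$ is odd, being the product of the odd polynomial $P^{\boldsymbol{\eta}}$ with the even functions $v^{\boldsymbol{\eta}}$ and $e^{-t^2/2}$. Therefore $\widehat{\psi}(0) = \int_{-\infty}^{\infty} \psi(t)\, dt = 0$. Since $\widehat{\psi}$ is continuously differentiable with $\widehat{\psi}(0) = 0$, a first-order Taylor expansion around the origin furnishes a constant $C > 0$ with $|\widehat{\psi}(\xi)| \leq C |\xi|$ for $|\xi| \leq 1$. This yields $|\widehat{\psi}(\xi)|^2 / |\xi| \leq C^2 |\xi|$ on $[-1, 1]$, which is integrable there. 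Combining the two regimes shows the integral is finite, and as the integrand is nonnegative, the resulting value $M$ is a well-defined positive real number.

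The main obstacle is the behavior at the origin: one must verify not only that $\widehat{\psi}(0) = 0$ but that the vanishing is of order at least one, so that division by $|\xi|$ still leaves an integrable quantity. This is precisely what the oddness of $\psi$ (which forces the zeroth Fourier mode to vanish) combined with the smoothness of $\widehat{\psi}$ delivers. The tail and the remaining estimates are routine consequences of the Schwartz property established in the first step.
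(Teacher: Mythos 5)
Your proof is correct and follows essentially the same route as the paper's: oddness of $\psi$ forces $\widehat{\psi}(0)=0$, rapid decay of $\psi$ (the paper only needs $\psi, t\psi \in L_1$ to get $\widehat{\psi}\in C^1$, whereas you note the stronger Schwartz property) upgrades this to first-order vanishing that tames the $1/|\xi|$ singularity, and the tail is handled by $\widehat{\psi}\in L_2$. The only cosmetic difference is that you make the bound $|\widehat{\psi}(\xi)|\le C|\xi|$ near the origin explicit, while the paper phrases the same fact as continuity of $\widehat{\psi}(\xi)/\xi$ at $0$.
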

\begin{proof}
Recall that rational Gaussian wavelets were defined according to Eq.~\eqref{eq:RGW}, that is
\begin{equation*}
    \Psi^{\boldsymbol{\eta}}(t) := C(\boldsymbol{\eta}) \cdot P^{\boldsymbol{\eta}}(t) \cdot v^{\boldsymbol{\eta}}(t) \cdot e^{-t^2/2},
\end{equation*}
where $P^{\boldsymbol{\eta}}$ is an odd polynomial symmetric to $0$ and $v^{\boldsymbol{\eta}}$ belongs to the class of rational functions given in Eq.~\eqref{eq:ratclass}. As explained in section~\ref{sec:rgw}, the parameter vector $\boldsymbol{\eta}$ fully defines the zeros and poles of $P^{\boldsymbol{\eta}} \cdot v^{\boldsymbol{\eta}}$. From Eq.~\eqref{eq:ratclass}, it is clear that $\lim_{t \to \pm \infty} v^{\boldsymbol{\eta}}(t) = 0$, and since $v^{\boldsymbol{\eta}}$ has no real poles we have
\begin{equation*}
    \left| v^{\boldsymbol{\eta}}(t) \right| < V < \infty \quad (t \in \mathbb{R}).
\end{equation*}
The Gaussian term $e^{-t^2/2}$ belongs to the class of Schwartz-functions denoted by $\mathcal{S}(\mathbb{R})$. It is well-known that $\mathcal{S}(\mathbb{R}) \subset L_1(\mathbb{R})$ and if $f \in \mathcal{S}(\mathbb{R})$, then $Q \cdot f \in \mathcal{S}(\mathbb{R})$, where $Q$ is an arbitrary polynomial. From these observations we obtain
\begin{equation}
    \label{eq:l1xpsi}
    \begin{split}
    & \int_{-\infty}^{+\infty} \left| t \cdot \psi^{\boldsymbol{\eta}}(t) \right| dt = C(\boldsymbol{\eta}) \int_{-\infty}^{+\infty} \left| t \cdot P^{\boldsymbol{\eta}}(t) v^{\boldsymbol{\eta}}(t) e^{-t^2/2} \right| dt = \\ & C(\boldsymbol{\eta}) \int_{-\infty}^{+\infty} \left| t \cdot P^{\boldsymbol{\eta}}(t) \cdot e^{-t^2/2    } \right| \left| v^{\boldsymbol{\eta}}(t) \right| dt \\ & < C(\boldsymbol{\eta}) \cdot V \cdot \int_{-\infty}^{+\infty} \left| Q^{\boldsymbol{\eta}}(t) \cdot e^{-t^2/2} \right| dt < \infty,
    \end{split}
\end{equation}
since $Q^{\boldsymbol{\eta}}(t)$ is a polynomial, we have $t \cdot P^{\boldsymbol{\eta}}(t) \cdot e^{-t^2/2} \in \mathcal{S}(\mathbb{R}) \subset L_1(\mathbb{R})$. In other words we find that $t \cdot \psi^{\boldsymbol{\eta}}$ is in $L_1(\mathbb{R})$.

Notice, that $\psi^{\boldsymbol{\eta}}$ is an odd function, that is $\psi(t) = -\psi(-t) \ (t \in \mathbb{R})$. This is so, because $e^{-t^2/2}$ and $v^{\boldsymbol{\eta}}$ are even and $P^{\boldsymbol{\eta}}$ is an odd polynomial satisfying Eq.~\eqref{eq:polpart}. Using the oddity of $\psi^{\boldsymbol{\eta}}$ we obtain
\begin{equation}
    \label{eq:psifour0}
    \widehat{\psi^{\boldsymbol{\eta}}}(0) = \int_{-\infty}^{+ \infty} \psi^{\boldsymbol{\eta}}(t) dt = \int_{0}^{+ \infty} \psi^{\boldsymbol{\eta}}(t) + \psi^{\boldsymbol{\eta}}(-t) dt = 0.
\end{equation}
Our claim is a simple consequence of Eqs.~\eqref{eq:l1xpsi} and~\eqref{eq:psifour0}. Since $\psi^{\boldsymbol{\eta}}$ and $t \cdot \psi^{\boldsymbol{\eta}}$ are both integrable, by the theorem regarding the differentiability of the Fourier transform (see e.g. Proposition 17.2.1 in~\cite{gasquet2013fourier}) we have $\widehat{\psi^{\boldsymbol{\eta}}} \in C^{1}(\mathbb{R})$. However if this is the case, then the function $\widehat{\psi^{\boldsymbol{\eta}}}(\xi) / \xi$ is continuous at $0$. This is true, since in $0$ by definition, the derivative of $\widehat{\psi^{\boldsymbol{\eta}}}$ is expressed by
\begin{equation*}
\left(\widehat{\psi^{\boldsymbol{\eta}}}\right)'(0) := \lim_{\xi \to 0} \frac{\widehat{\psi^{\boldsymbol{\eta}}}(\xi) - \widehat{\psi^{\boldsymbol{\eta}}}(0)}{\xi - 0} = \lim_{\xi \to 0} \frac{\widehat{\psi^{\boldsymbol{\eta}}}(\xi)}{ \xi},
\end{equation*}
where we used Eq.~\eqref{eq:psifour0} for the second equality. If $\xi \to \infty$, then the integral in theorem~\ref{thm:admisrgw} certainly exists, since $\psi^{\boldsymbol{\eta}} \in L_2(\mathbb{R})$. This concludes our proof of statement~\ref{thm:admisrgw}.
\end{proof}

\begin{thm}[Error of variable projection based continuous wavelet coefficients (Theorem~\ref{them:errorest})]
    Let $\psi \in L_2(\mathbb{R})$ be a fixed mother wavelet and suppose $f \in L_2(\mathbb{R})$ is continuously differentiable and compactly supported on $[a, b] \subset \mathbb{R}$. Let $a =: t_0 < t_1 < \ldots < t_{N-1} := b$ be an equidistant sampling of $[a, b]$ with $h := t_{1} - t_0$. In addition, consider $\boldsymbol{\eta} = (\lambda_1, \tau_1, \ldots, \lambda_m, \tau_m) \in \mathbb{R}^{2m}$ with $\lambda_k > 0 \ (k=1,\ldots,m)$. Finally, let $\boldsymbol{f}_k := f(t_k) \ (k=0,\ldots,N-1)$. Then we have
    \begin{equation*}
        \begin{split}
        & \left|W_{\psi}f(\lambda_k, \tau_k) - h \cdot \left(\Psi(\boldsymbol{\eta})^+ \boldsymbol{f} \right)_k \right| < \\
        & h \cdot \frac{M_1 (b - a)}{2} + h \cdot\|\boldsymbol{f}\|_{\infty} \|\Psi^{*}(\boldsymbol{\eta}) \|_{\infty} \cdot \\ &  \left( \frac{\kappa(G(\boldsymbol{\eta})) }{\|G(\boldsymbol{\eta})\|_{\infty}} + 1 \right),
        \end{split}
    \end{equation*}
    where 
    \begin{equation*}
        M_1 := \max_{\xi \in [a, b], k=1,\ldots,m} \left|f'(\xi) \cdot \overline{\psi_{\lambda_k, \tau_k}}(\xi) \right|,
    \end{equation*}
    the columns of $\Psi(\boldsymbol{\eta})$ contain the samplings of $\psi_{\lambda_k, \tau_k} \ (k=1,\ldots,m)$ over $t_j \ (j=0,\ldots,N-1)$ and $G(\boldsymbol{\eta})$ is the Gram-matrix constructed form the columns of $\Psi(\boldsymbol{\eta})$. In Eq.~\eqref{eq:errest}, $\kappa(G(\boldsymbol{\eta}))$ denotes the condition number of $G(\boldsymbol{\eta})$ using the matrix infinity norm and $\Psi^{*}(\boldsymbol{\eta})$ denotes the adjungate of $\Psi(\boldsymbol{\eta})$.    
\end{thm}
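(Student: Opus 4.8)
The plan is to insert the composite rectangle-rule quadrature $h\,(\Psi(\boldsymbol{\eta})^{*}\boldsymbol{f})_k$ between the exact wavelet coefficient and its variable-projection estimate, and then bound the two resulting pieces independently. Writing $\Psi := \Psi(\boldsymbol{\eta})$ and recalling that the $k$-th column of $\Psi$ is the sampling of $\psi_{\lambda_k,\tau_k}$, the $k$-th entry of $\Psi^{*}\boldsymbol{f}$ equals $\sum_{j=0}^{N-1}\overline{\psi_{\lambda_k,\tau_k}}(t_j)\,f(t_j)$, so $h\,(\Psi^{*}\boldsymbol{f})_k$ is precisely the left-rectangle quadrature of the integral defining $W_\psi f(\lambda_k,\tau_k)$. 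This gives the decomposition
\[
W_\psi f(\lambda_k,\tau_k) - h\,(\Psi^{+}\boldsymbol{f})_k = \underbrace{\big[W_\psi f(\lambda_k,\tau_k) - h\,(\Psi^{*}\boldsymbol{f})_k\big]}_{\text{(I) quadrature error}} + h\,\underbrace{\big[(\Psi^{*}\boldsymbol{f})_k - (\Psi^{+}\boldsymbol{f})_k\big]}_{\text{(II) algebraic error}},
\]
and the two summands in the claimed bound will come from (I) and (II) respectively, combined at the end by the triangle inequality.

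For the quadrature error (I), I would use that $f$ is continuously differentiable and supported in $[a,b]$, so $W_\psi f(\lambda_k,\tau_k)=\int_a^b f(t)\,\overline{\psi_{\lambda_k,\tau_k}}(t)\,dt$. On each subinterval $[t_j,t_{j+1}]$ I would apply the mean value theorem to write $f(t)-f(t_j)=f'(\xi)(t-t_j)$ and estimate the variation of the integrand by $M_1\,h^2/2$; summing over the $N-1$ subintervals and using $(N-1)h=b-a$ produces $h\,M_1(b-a)/2$. The delicate point here is that $M_1$ contains only $f'$ and no derivative of $\psi$: the estimate charges the variation of the integrand entirely to the Lipschitz behaviour of $f$, treating $\overline{\psi_{\lambda_k,\tau_k}}$ as a bounded weight evaluated along the same subinterval, so reconciling the precise constant with the rectangle-rule residual is the step requiring care in (I).

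For the algebraic error (II), the key identity is $\Psi^{+}=G^{-1}\Psi^{*}$ with $G:=G(\boldsymbol{\eta})=\Psi^{*}\Psi$, which is legitimate because the columns of $\Psi$ are linearly independent and hence $G$ is invertible. Then
\[
(\Psi^{*}\boldsymbol{f})_k - (\Psi^{+}\boldsymbol{f})_k = \big((I-G^{-1})\Psi^{*}\boldsymbol{f}\big)_k,
\]
and passing to the infinity norm with submultiplicativity gives the pointwise bound $\|I-G^{-1}\|_\infty\,\|\Psi^{*}\|_\infty\,\|\boldsymbol{f}\|_\infty$. The triangle inequality yields $\|I-G^{-1}\|_\infty\le 1+\|G^{-1}\|_\infty$, and rewriting $\|G^{-1}\|_\infty=\kappa(G)/\|G\|_\infty$ from the definition $\kappa(G)=\|G\|_\infty\|G^{-1}\|_\infty$ produces the factor $\kappa(G)/\|G\|_\infty+1$; multiplying by $h$ recovers the second term.

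I expect the algebraic step (II) to be the main obstacle, since it is where the pseudo-inverse must be resolved through the Gram matrix and the condition number introduced cleanly without losing track of which matrix norm is in play. The subtlety in (I) — justifying that the quadrature error is governed by $M_1$ alone — is the secondary point that needs attention, as it fixes the exact form of the first term. Once both estimates are in hand, adding them via the triangle inequality gives the stated bound, and the remark that the approximation converges to the true coefficient as $h\to 0^+$ follows immediately since both terms carry a factor of $h$.
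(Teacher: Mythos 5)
Your proposal follows essentially the same route as the paper's proof: the same splitting of the error into a rectangle-rule quadrature term bounded by $h\,M_1(b-a)/2$ and an algebraic term obtained by resolving $\Psi^{+}=G^{-1}\Psi^{*}$ through the normal equations, with the factor $\kappa(G)/\|G\|_{\infty}+1$ arising from the triangle inequality exactly as in the paper (the paper applies it to the two vectors $\Psi^{*}\boldsymbol{f}$ and $G^{-1}\Psi^{*}\boldsymbol{f}$ directly rather than to $I-G^{-1}$, which is an immaterial difference). The subtlety you flag in step (I) --- that $M_1$ charges the integrand's variation entirely to $f'$ while treating $\overline{\psi_{\lambda_k,\tau_k}}$ as a frozen weight --- is indeed present in the paper's own invocation of the mean value theorem, so your account matches the published argument.
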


\begin{proof}
    Since the support of $f$ coincides with $[a, b]$, we have
    \begin{equation*}
        W_{\psi}(\lambda, \tau)f = \int_{a}^{b} f(t) \overline{\psi_{\lambda, \tau}}(t) dt
    \end{equation*}
    for all $(\lambda, \tau) \in \mathbb{R}_{+} \times \mathbb{R}$. This integral can be approximated using the numerical quadrature
    \begin{equation*}
        E(f, \lambda, \tau) := \sum_{j=0}^{N-1} f(t_j) \cdot \overline{\psi_{\lambda, \tau}}(t_j)
    \end{equation*}
    with the well-known error estimate (following from the mean value theorem regarding integration)
    \begin{equation}
        \label{eq:quadest}
        | W_{\psi}(\lambda, \tau)f - h \cdot E(f, \lambda, \tau) | < h \cdot \frac{M_1 (b-a)}{2}.
    \end{equation}
    Let the parameter pairs $(\lambda_k, \tau_k) \ (k=1,\ldots,m)$ be fixed and notice, that for a fixed $k$, the value of $E(f)$ can be expressed by
    \begin{equation*}
        E(f, \lambda_k, \tau_k) = \left( \Psi^{*} \boldsymbol{f} \right)_k,
    \end{equation*}
    where $\boldsymbol{f}$ denotes the sampling of $f$ over the points $t_j \ (j=0,\ldots,N-1)$ and $\Psi^{*}$ denotes the adjungate of $\Psi := \Psi(\boldsymbol{\eta})$. From this it is clear that the estimate in Eq.~\eqref{eq:quadest} only holds for the approximations obtained using variable projection, if $\Psi$ is unitary, i.e. $\Psi^{+} = \Psi^{*}$. If the columns of $\Psi$ contain analytic wavelets (for example RGW), this cannot be assumed, therefore it is necessary to estimate
    \begin{equation*}
        | \left( \Psi^{*} \boldsymbol{f} \right)_k - \left( \Psi^{+} \boldsymbol{f} \right)_k | \quad (k=1,\ldots,m).
    \end{equation*}
    Notice that $\Psi^{+} \boldsymbol{f}$ coincides with the solution to the Gaussian normal equation
    \begin{equation*}
        \Psi^{*} \Psi \boldsymbol{c} = \Psi^{*} \boldsymbol{f}.
    \end{equation*}
    Here we exploited the fact that for pairwise different $\lambda_k, \tau_k$ parameters, $\Psi$ will be full-rank. The solution is expressed by
    \begin{equation*}
        \boldsymbol{c} = \left( \Psi^{*} \Psi \right)^{-1} \Psi^{*} \boldsymbol{f} = G^{-1} \Psi^{*} \boldsymbol{f},
    \end{equation*}
    where $G(\boldsymbol{\eta}) = G = \Psi^{*} \Psi$. Using this, the triangle inequality and the consistency of the $\|\cdot\|_{\infty}$ matrix and vector norms, we obtain
    \begin{equation}
        \label{eq:finest}
        \begin{split}
         &   \| \Psi^{*} \boldsymbol{f} - G^{-1} \Psi^{*} \boldsymbol{f} \|_{\infty} < \|\Psi^{*} \boldsymbol{f} \|_{\infty} + \| G^{-1} \Psi^{*} \boldsymbol{f} \|_{\infty} < \\ & \|\Psi^{*}\|_{\infty} \|\boldsymbol{f} \|_{\infty} \cdot \left( \|G^{-1}\|_{\infty} + 1\right) = \|\Psi^{*}\|_{\infty} \|\boldsymbol{f} \|_{\infty} \left( \frac{\kappa(G)}{\|G\|_{\infty}} + 1\right).
        \end{split}
    \end{equation}
    We once more use the triangle rule together with Eqs.~\eqref{eq:quadest} and~\eqref{eq:finest}:
    \begin{equation*}
        \begin{split}
            & |W_{\psi}f(\lambda_k, \tau_k) - h (\Psi^{+} \boldsymbol{f})_k| = \\ & |W_{\psi}f(\lambda_k, \tau_k) - h (\Psi^{*} \boldsymbol{f} + \Psi^{+} \boldsymbol{f} - \Psi^{*} \boldsymbol{f})_k | = \\
            & |W_{\psi}f(\lambda_k, \tau_k) - h (\Psi^{*} \boldsymbol{f})_k - h (\Psi^{+} \boldsymbol{f} - \Psi^{*} \boldsymbol{f})_k | < \\
            & |W_{\psi}f(\lambda_k, \tau_k) - h (\Psi^{*} \boldsymbol{f})_k| + h \cdot |(\Psi^{+} \boldsymbol{f} - \Psi^{*} \boldsymbol{f})_k| <  \\
            & h \cdot \frac{M_1 (b-a)}{2} + h \cdot \|\Psi^{+} \boldsymbol{f} - \Psi^{*} \boldsymbol{f}\|_{\infty} < \\
            & h \cdot \frac{M_1 (b-a)}{2} + h \cdot \|\Psi^{*}\|_{\infty} \|\boldsymbol{f} \|_{\infty} \left( \frac{\kappa(G)}{\|G\|_{\infty}} + 1\right).
        \end{split}
    \end{equation*}
    This concludes our proof.
\end{proof}

\section*{Acknowledgment}
Project no.\ C1748701 has been implemented with the support provided by the Ministry of Culture and Innovation of Hungary from the National Research, Development and Innovation Fund, financed under the NVKDP-2021 funding scheme. Project no. TKP2021-NVA-29 and K146721 have been implemented with the support provided by the Ministry of Culture and Innovation of Hungary from the National Research, Development and Innovation Fund, financed under the TKP2021-NVA and the K\_23 "OTKA" funding schemes. Supported by the EKÖP-KDP-24 university excellence scholarship program
cooperative doctoral program of the Ministry for Culture and Innovation from the source of the National Research, Development and Innovation Fund.

\ifCLASSOPTIONcaptionsoff
  \newpage
\fi



%
\bibliographystyle{IEEEtran}
\bibliography{refs.bib}

%
\vspace{-1cm}
\begin{IEEEbiography}[{\includegraphics[width=1in,height=1.25in,clip,keepaspectratio]{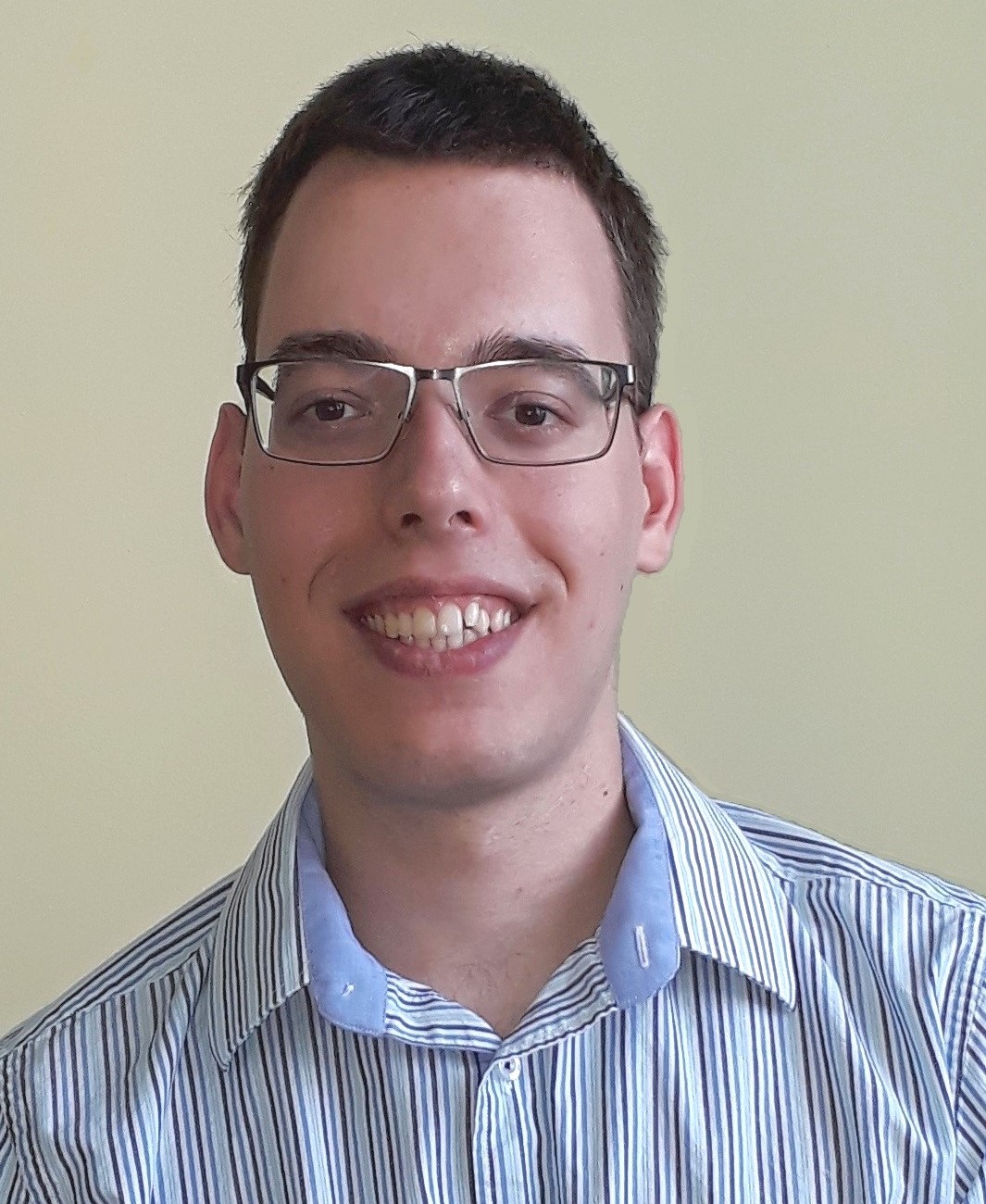}}]{Attila Mikl\'os \'Amon}
received the M.Sc. degree in computer
science from Eötvös Loránd University (ELTE),
Budapest, Hungary, in 2024, where he is currently
pursuing the Ph.D. degree in computer science. 
His research interests
include explainable AI, signal and image processing.
\end{IEEEbiography}
\vspace{-1cm}
\begin{IEEEbiography}[{\includegraphics[width=1in,height=1.25in,clip,keepaspectratio]{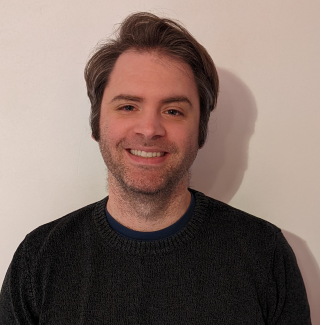}}]{Kristian Fenech}
Kristian Fenech received a Ph.D. degree in Physics from Swinburne University of Technology, Melbourne, Australia, in 2016.
He is currently an assistant professor with the Department of Artificial Intelligence, Faculty of Informatics, Eötvös Loránd University, Budapest, Hungary. His main research interests include machine learning, social signal processing, human centered AI and human-machine interaction.
\end{IEEEbiography}
\vspace{-1cm}
\begin{IEEEbiography}[{\includegraphics[width=1in,height=1.25in,clip,keepaspectratio]{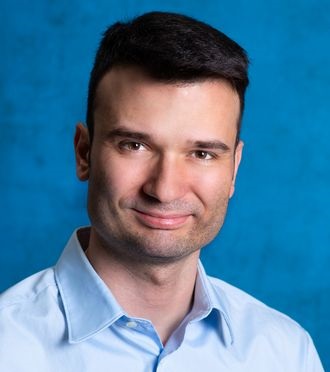}}]{P\'{e}ter Kovács} received the Ph.D. degree in computer
science from Eötvös Loránd University (ELTE),
Budapest, Hungary, in 2016; and completed his habilitation in computer science in 2022; since then, he has been an associate professor at the Department of Numerical Analysis of ELTE in Budapest, Hungary. In 2012, he was a visiting researcher at the Department of Signal Processing, Tampere University of Technology in Finland. From 2018 to 2020, he was a postdoc at the Institute of Signal Processing, Johannes Kepler University Linz in Austria. In 2016, he was honored with the Farkas Gyula Prize in applied mathematics from the János Bolyai Mathematical Society. With over 40 published scientific papers, his work on Variable Projection Networks was recognized with the Hojjat Adeli Award by the International Journal of Neural Systems. He is a member of the public body of the Hungarian Academy of Sciences and the IEEE EMBS Young Professionals Committee. His main research interests include signal and image processing, numerical analysis, system identification, and explainable AI. 
\end{IEEEbiography}


\begin{IEEEbiography}[{\includegraphics[width=1in,height=1.25in,clip,keepaspectratio]{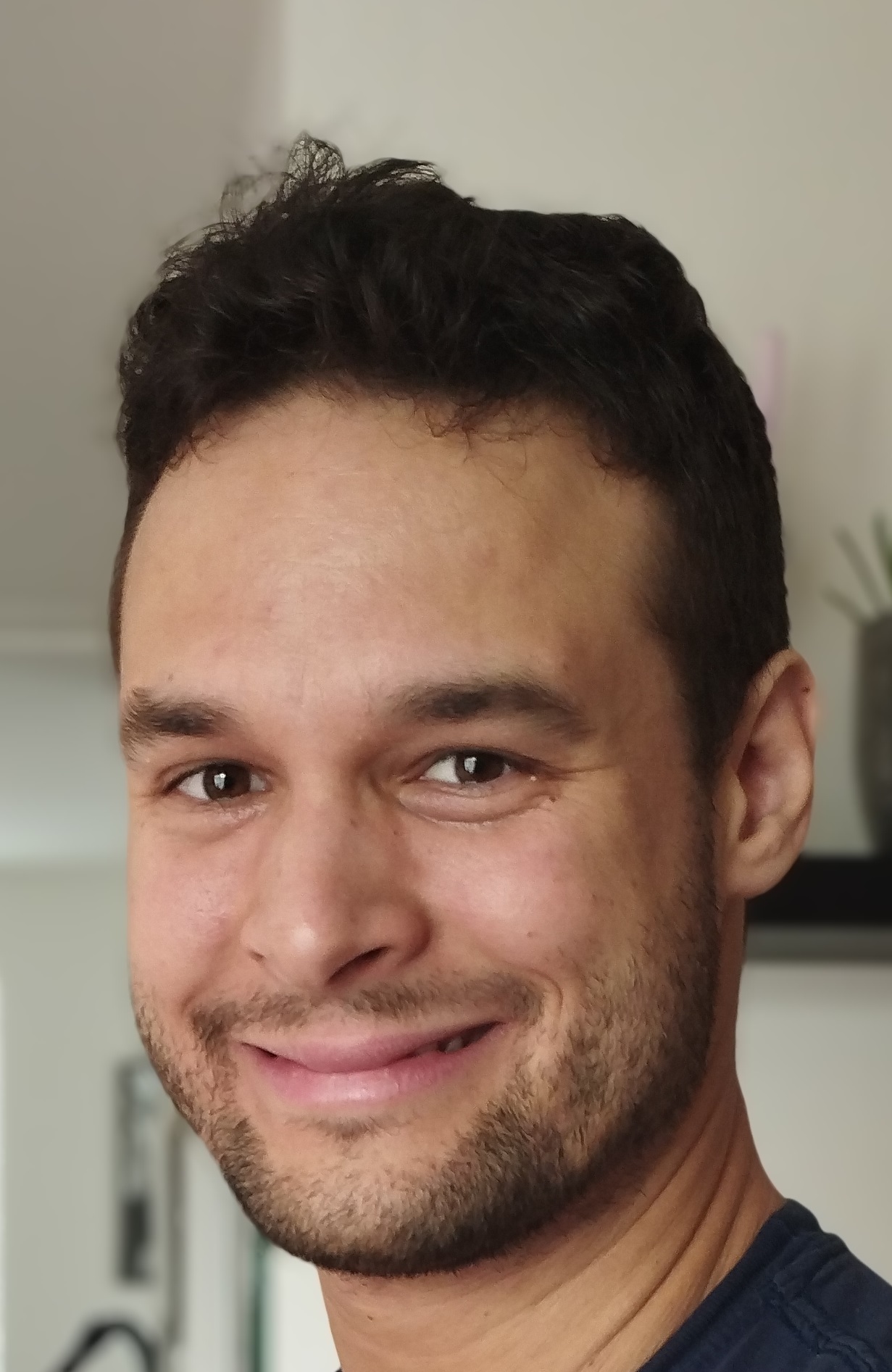}}]{Tamás Dózsa} received the M.Sc. degree in computer
science from Eötvös Loránd University (ELTE),
Budapest, Hungary, in 2020, where he is currently
pursuing the Ph.D. degree in computer science. During his B.Sc. and M.Sc. studies he received numerous recognitions at the university and national level for participating in scientific competitions. He authored 20 scientific publications thus far.

Since 2020, he has been working as a Research
Associate with the HUN-REN Institute for Computer Science and Control (SZTAKI) Systems and Control Laboratory, Budapest. His research interests
include numerical analysis, signal processing, optimization, and system theory.
\end{IEEEbiography}




\end{document}